\newcommand{\ind}{\perp\!\!\!\perp} 
\newcommand{\starto}{*\!\!\to}
\def\XXint#1#2#3{{\setbox0=\hbox{$#1{#2#3}{\int}$ }
\vcenter{\hbox{$#2#3$ }}\kern-.6\wd0}}
\theoremstyle{plain}
\newtheorem{theorem}{Theorem}[section]
\newtheorem{proposition}[theorem]{Proposition}
\newtheorem{lemma}[theorem]{Lemma}
\theoremstyle{definition}
\newtheorem{definition}[theorem]{Definition}
\newtheorem{assumption}[theorem]{Assumption}
\theoremstyle{remark}
\newtheorem{remark}[theorem]{Remark}
\newtheorem{example}[theorem]{\textbf{\emph{Example}}}
\newcommand{\doubleleftarrows}{\small \leftarrow\!\cdots\!\leftarrow}
\newcommand{\doubleto}{\small \to\!\cdots\!\to}
\title{Causal Discovery from Subsampled Time Series \\ with Proxy Variables}
\author{%
  Mingzhou Liu$^{1,2}$\\
  \And
  Xinwei Sun$^3$\thanks{Correspondence to sunxinwei@fudan.edu.cn} \\
  \And
  Lingjing Hu$^4$ \\
  \And
  Yizhou Wang$^{1,2,5}$ \\
}
\date{\newline\newline
    $^1$ School of Computer Science, Peking University \\
    $^2$ Center on Frontiers of Computing Studies (CFCS), Peking University\\ 
    $^3$ School of Data Science, Fudan University \\ 
    $^4$ Yanjing Medical College, Capital Medical University \\ 
    $^5$ Institute for Artificial Intelligence, Peking University\\
    %\vspace{-1em}
} 
\let\oldmaketitle\maketitle
\renewcommand{\maketitle}{\oldmaketitle\setcounter{footnote}{0}}
\begin{document}
\maketitle

% add the following to make toc work
\addtocontents{toc}{\protect\setcounter{tocdepth}{0}}

\begin{abstract}
    Inferring causal structures from time series data is the central interest of many scientific inquiries. A major barrier to such inference is the problem of subsampling, \emph{i.e.}, the frequency of measurement is much lower than that of causal influence. To overcome this problem, numerous methods have been proposed, yet either was limited to the linear case or failed to achieve identifiability. In this paper, we propose a constraint-based algorithm that can identify the entire causal structure from subsampled time series, without any parametric constraint. Our observation is that the challenge of subsampling arises mainly from hidden variables at the unobserved time steps. Meanwhile, every hidden variable has an observed proxy, which is essentially itself at some observable time in the future, benefiting from the temporal structure. Based on these, we can leverage the proxies to remove the bias induced by the hidden variables and hence achieve identifiability. Following this intuition, we propose a proxy-based causal discovery algorithm. Our algorithm is nonparametric and can achieve full causal identification. Theoretical advantages are reflected in synthetic and real-world experiments. Our code is available at \url{https://github.com/lmz123321/proxy_causal_discovery}.
\end{abstract}
\section{Introduction}

Temporal systems are the primary subject of causal modeling in many sciences, such as pathology, neuroscience, and economics. For these systems, a common issue is the difficulty of collecting sufficiently refined timescale data. That is, we can only observe a \emph{subsampled} version of the true causal interaction. This can cause serious problems for causal identification, since full observation (\emph{i.e.}, causal sufficiency \cite{pearl2009causality}) is believed to be an important condition for causal identification, and the violation of which (\emph{e.g.}, the existence of hidden variables) can induce strong bias \cite{suppes1980causal}.
\vspace{+0.15cm}
\begin{example}
    Take the pathology study of Alzheimer's disease (AD) as an example. In AD, patients suffer from memory loss due to the atrophy of memory-related brain regions such as the Hippocampus \cite{west1994differences}. To monitor such atrophy, the standard protocol is to perform a MRI examination on the brain every six months \cite{petersen2010alzheimer}. However, many studies have shown that the disease can progress much more rapidly than this \cite{doody2001method,thalhauser2012alzheimer}. For this reason, when recovering the causal interactions between brain regions, those at the unobserved time steps constitute hidden variables and induce spurious edges in the causal graph, as shown in Figure~\ref{fig.ad_8nodes} (b).
\end{example}

\begin{figure}[tp]
    \centering
    \includegraphics[width=.9\textwidth]{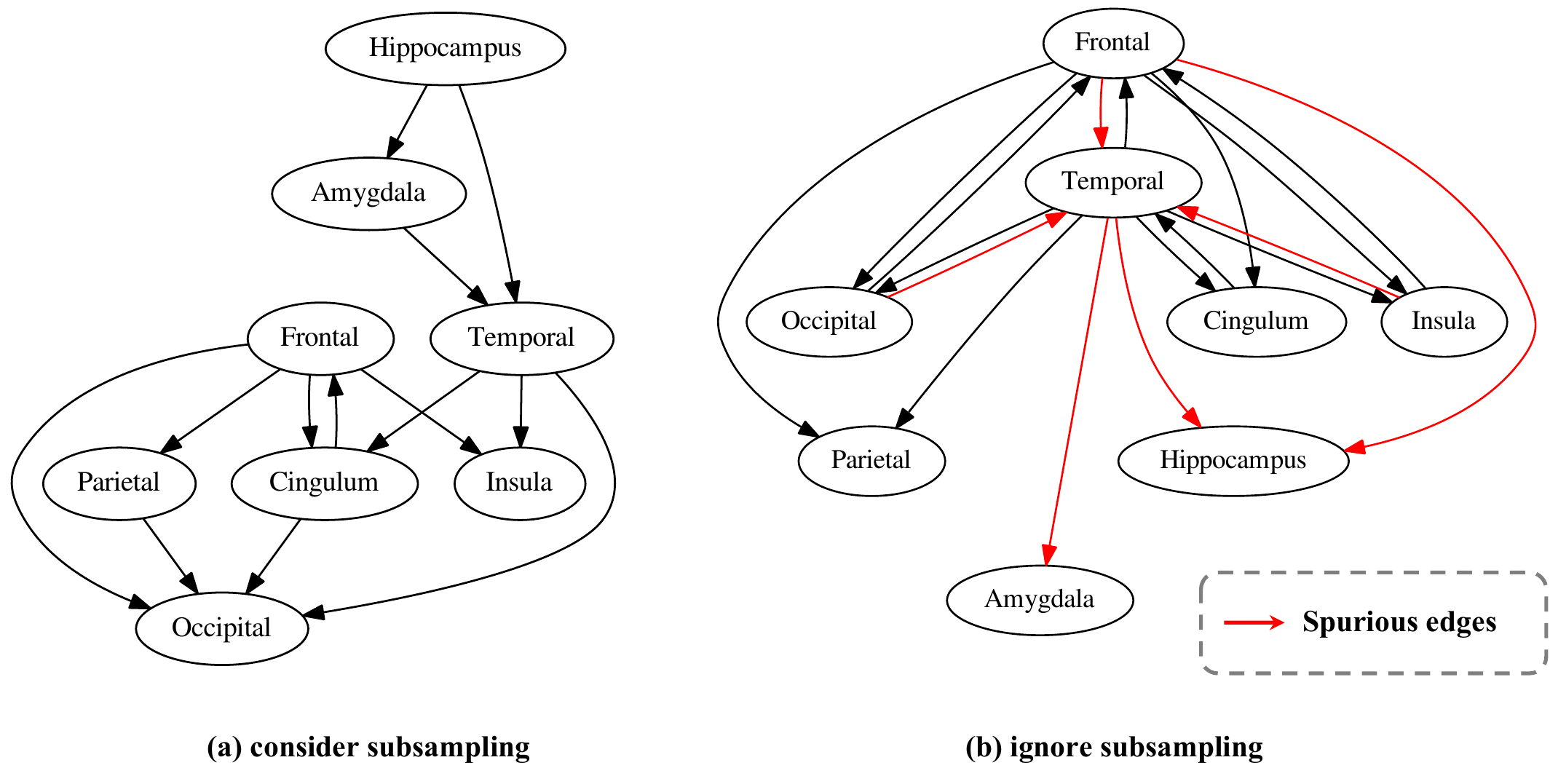}
    \caption{Recovered causal pathways in Alzheimer's disease, with (a) our method that explicitly models the subsampling and (b) Dynotears \cite{pamfil2020dynotears} that neglects subsampling. 
    Spurious edges that contradict clinical studies \cite{young2018uncovering,vogel2021four} are marked red. Please refer to Section~\ref{sec.exp ad} and Figure~\ref{fig.ad_90nodes} for details.}
    \label{fig.ad_8nodes}
\end{figure}

To resolve this problem, many attempts have been made to recover the correct causal relations from subsampled data. However, none of them achieved general identifiability results. For example, in \citep{gong2015discovering,gong2017causal,tank2019identifiability}, identifiability was achieved only for linear data. As for nonlinear data, only a small part of the causal information (\emph{i.e.}, an equivalence class) could be identified \cite{danks2013learning,plis2015rate,hyttinen2016causal,malinsky2018causal}. Indeed, encountering such difficulties is no coincidence, considering the fact that there are many hidden variables in the system. Particularly, the hidden mediators prevent us from distinguishing the direct causation from possible mediation and hence only allow us to recover some ancestral information.

In this paper, we propose a constraint-based algorithm that can identify the entire causal structure from subsampled data, without any parametric assumption. Our departure here is that in time series, every hidden variable has an observed descendent, which is essentially itself at some observable time in the future\footnote{For example, the hidden variable $X(t_u)$ at the unobserved time step $t_u$ with $X(t_u)\to X(t_u+1) \to X(t_u+2) \doubleto X(t_o)$ has an observable descendent $X(t_o)$ at some observable time $t_o$ in the future.}. Therefore, we can use the observed descendent as its \emph{proxy variable} \cite{kuroki2014measurement} and adjust for the bias induced by it. For this purpose, we first represent the ancestral information, which is composed of both direct causation and hidden mediation, with the Maximal Ancestral Graph (MAG) \cite{spirtes1999computation}. Then, to distinguish causation from mediation, we use the observed descendants of the hidden mediators as their proxy variables and adjust for their induced mediation bias. We show that our strategy can achieve a complete identification of the causal relations entailed in the time series. It is essentially nonparametric and only relies on common continuity and differentiability of the structural equation to conduct valid proxy-based adjustment \cite{liu2023causal}. 

Back to the AD example, as shown in Figure~\ref{fig.ad_8nodes}, our method can recover causal pathways that align well with existing clinical studies \cite{young2018uncovering,vogel2021four}, while compared baselines fail to do so.

Our contributions are summarized below:
\begin{enumerate}[leftmargin=25pt,itemsep=0.15mm]
    \item We establish nonparametric causal identification in subsampled time series.
    \item We propose an algorithm to practically recover the causal graph.
    \item We achieve more accurate causal identification than others on synthetic and real data.
\end{enumerate}

The rest of the paper is organized as follows. First, in Section~\ref{sec.preliminary}, we introduce the problem setting, basic assumptions, and related literature needed to understand our work. Second, in Section~\ref{sec.identify}, we establish theories on structural identifiability. Then, based on these theories, in Section~\ref{sec.alg}, we introduce the proposed causal discovery algorithm and verify its effectiveness in Section~\ref{sec.exp}. Finally, we conclude the paper and discuss future works in Section~\ref{sec.conclusion}.

\section{Preliminary}
\label{sec.preliminary}

We start by introducing the causal framework and related literature needed to understand our work. 

\noindent\textbf{Time series model.} Let $\mathbf{X}(t):=\left[X_1(t),...,X_d(t)\right]$ be a multivariate time series with $d$ variables defined at discrete time steps $t=1,...,T$. We assume the data is generated by a first-order structural vector autoregression (SVAR) process \cite{malinsky2018causal}:
\begin{equation}
    X_i(t) = f_i(\mathbf{PA}_i(t-1),N_i),
\label{eq.svar model}
\end{equation}
where $f_i$ is the structural function, $\mathbf{PA}_i$ is the parent of $X_i$, and $N_i$ is the exogenous noise.

Implicit in (\ref{eq.svar model}) is the assumptions that cause precedes effect, and that causation is invariant to time, \emph{i.e.}, the structural function $f_i$ and the causal parents $\mathbf{PA}_i$ keeps unchanged across time steps. These assumptions carry the fundamental beliefs of temporal precedence and stability on causality \cite{pearl2009causality}, therefore are widely adopted by existing works \cite{gong2015discovering,gong2017causal,tank2019identifiability,plis2015rate,malinsky2018causal}.

\noindent\textbf{Subsampling.} The subsampling problem means that model (\ref{eq.svar model}) can be only observed every $k$ steps \cite{gong2015discovering,tank2019identifiability,malinsky2018causal}. That is, we can only observe $\mathbf{X}(1),\mathbf{X}(k+1),...,\mathbf{X}( \lfloor\frac{T}{k}\rfloor k+1)$.

\begin{figure}[tp]
    \centering
    \includegraphics[width=\textwidth]{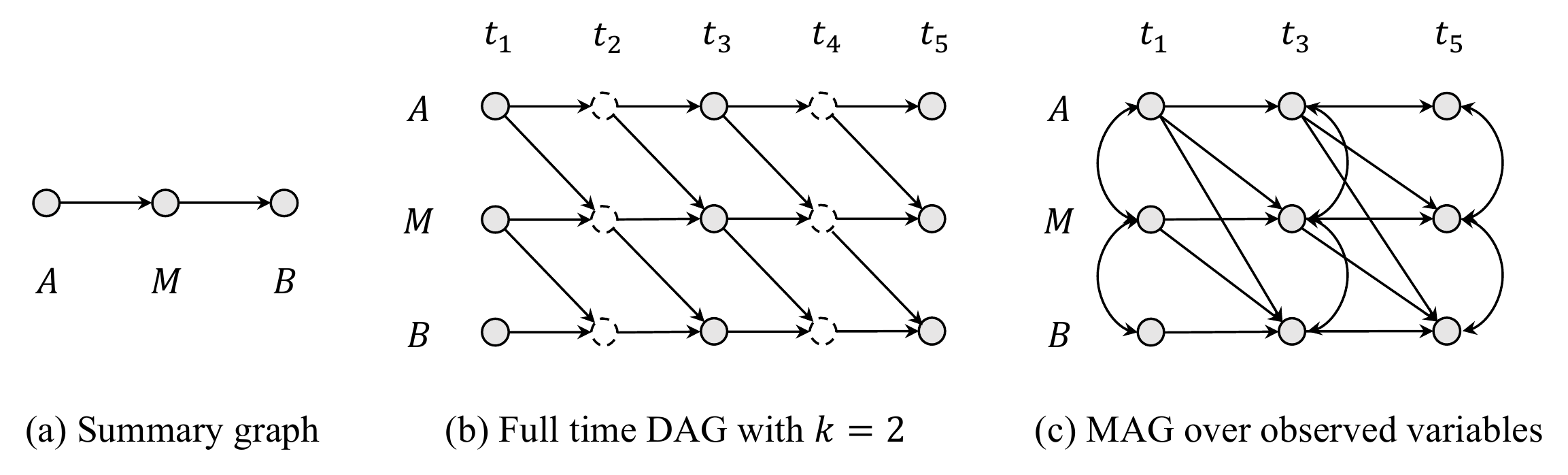}
    \caption{Graph terminologies: (a) Summary graph. (b) Full time DAG with subsampling factor $k=2$. Variables at observed time steps $t_1,t_3,t_5$ are marked gray, while those at unobserved time steps $t_2,t_4$ are dashed. (c) MAG over observed variables. The edges $A(t_1)\to M(t_3)$ and $A(t_3)\leftrightarrow M(t_3)$ are (\emph{resp.}) induced by the inducing paths $A(t_1)\to A(t_2)\to M(t_3)$ and $A(t_3) \leftarrow A(t_2) \to M(t_3)$ in the full time DAG. Please refer to the appendix for detailed explanations.}
    \label{fig.illustration graphs}
\end{figure}

\noindent\textbf{Graph terminologies.} Causal relations entailed in model (\ref{eq.svar model}) can be represented by causal graphs (Figure~\ref{fig.illustration graphs}). We first introduce the \emph{full time Directed Acyclic Graph (DAG)} \cite{gong2023causal}, which provides a complete description of the dynamics in the system.

\begin{definition}[Full time DAG]
Let $G:=(\mathbf{V},\mathbf{E})$ be the associated full time DAG of model (\ref{eq.svar model}). The vertex set $\mathbf{V}:=\{\mathbf{X}(t)\}_{t=1}^T$, the edge set $\mathbf{E}$ contains $X_i(t-1) \to X_j(t)$ iff $X_i\in \mathbf{PA}_j$.
\end{definition}

We assume the full time DAG is Markovian and faithful to the joint distribution $\mathbb{P}(\mathbf{X}(1),...,\mathbf{X}(T))$.

\begin{assumption}[Markovian and faithfulness]
For disjoint vertex sets $\mathbf{A},\mathbf{B},\mathbf{Z} \subseteq \mathbf{V}$, $\mathbf{A} \ind \mathbf{B} | \mathbf{Z} \Leftrightarrow \mathbf{A} \ind_G \mathbf{B} | \mathbf{Z}$, where$\ind_G$ denotes \emph{d}-separation in $G$.
\label{asm.mark and faith}
\end{assumption}

In practice, it is often sufficient to know the causal relations between time series as a whole, without knowing precisely the relations between time instants \cite{assaad2022discovery}. In this regard, we can summarize the causal relations with the \emph{summary graph} \cite{gong2023causal}.

\begin{definition}[Summary graph]
Let $G:=(\mathbf{V},\mathbf{E})$ be the associated summary graph of model (\ref{eq.svar model}). The vertex set $\mathbf{V}:=\mathbf{X}$, the edge set $\mathbf{E}$ contains $X_i \to X_j (i \neq j)$ iff $X_i \in \mathbf{PA}_j$.
\end{definition}

In this paper, our goal is to recover the summary graph, given data observed at $t=1,k+1,...,\lfloor \frac{T}{k}\rfloor k+1$. 

For this purpose, we need a structure to represent the (marginal) causal relations between observed variables and hence link the observation distribution to the summary graph. Here, we use the \emph{Maximal Ancestral Graph (MAG)} \cite{zhang2008completeness}, since it can represent casual relations when unobserved variables exist.

Specifically, for the variable set $\mathbf{V}:=\{\mathbf{X}(t)\}_{t=1}^T$, let $\mathbf{O}:=\{\mathbf{X}(1),...,\mathbf{X}(\lfloor \frac{T}{k} \rfloor k+1)\}$ be the observed subset and $\mathbf{L}:=\mathbf{V}\backslash \mathbf{O}$ be the unobserved subset. Let $\mathbf{An}(A)$ be the ancestor set of $A$. Then, given any full time DAG $G$ over $\mathbf{V}$, the corresponding MAG $M_G$ over $\mathbf{O}$ is defined as follows:

\begin{definition}[MAG]
In the MAG $M_G$, two vertices $A,B \in \mathbf{O}$ are adjacent iff there is an inducing path. \footnote{A path $p$ between $A,B \in \mathbf{O}$ is an inducing path relative to $\mathbf{L}$ if every non-endpoint vertex on $p$ is either in $\mathbf{L}$ or a collider, and every collider on $p$ is an ancestor of either $A$ or $B$.} relative to $\mathbf{L}$ between them in $G$. Edge orientation is:

\begin{enumerate}[leftmargin=25pt]
    \item $A \to B$ in $M_G$ if $A \in \mathbf{An}(B)$ in $G$; 
    \item $A \leftarrow B$ in $M_G$ if $B \in \mathbf{An}(A)$ in $G$;
    \item $A \leftrightarrow B$ in $M_G$ if $A \not\in \mathbf{An}(B)$ and $B \not\in \mathbf{An}(A)$ in $G$.
\end{enumerate}

% \quad\,
% \begin{enumerate*}[series=tobecont,itemjoin=\quad]
%     \item $A \to B$ in $M_G$ if $A \in \mathbf{An}(B)$ in $G$; \item $A \leftarrow B$ in $M_G$ if $B \in \mathbf{An}(A)$ in $G$;
% \end{enumerate*}
% \begin{enumerate}[resume=tobecont, ,topsep=0pt, partopsep=0pt, leftmargin=27pt]
%     \item $A \leftrightarrow B$ in $M_G$ if $A \not\in \mathbf{An}(B)$ and $B \not\in \mathbf{An}(A)$ in $G$.
% \end{enumerate}

\label{def.mag}
\end{definition}

\begin{remark}
    In the MAG, the directed edge $A \to B$ means $A$ is the ancestor of $B$. The bidirected edge $A \leftrightarrow B$ means there is an unobserved confounder $U$ between $A$ and $B$. 
\label{remark.mag}
\end{remark}

\noindent\textbf{Proximal causal discovery.} Recent works \cite{miao2018identifying,liu2023causal} found that one could use the descendant (\emph{i.e.}, proxy) of an unobserved variable $M$ to differentiate direct causation $A\to B$ from hidden mediation $A\to M \to B$. Specifically, suppose that:

\begin{assumption}[TV Lipschitzness]
    For each causal pair $A\to B$, the map $a\mapsto \mathbb{P}(B|A=a)$ is Lipschitz with respect to the Total Variation (TV) distance, that is, there exists a constant $L$ such that:
    \begin{equation*}
        \mathrm{TV}(\mathbb{P}(B|A=a),\mathbb{P}(B|A=a^\prime)) \leq L|a-a^\prime|.
    \end{equation*}
    \label{asm.tv-smooth}
\end{assumption}
\begin{assumption}[Completeness]
    For each causal pair $A\to B$, the conditional distribution $\mathbb{P}(B|A)$ is complete, that is, for any function $g$, we have:
    \begin{equation*}
        E\{g(b)|a\}=0 \,\, \text{almost surely} \,\, \text{if and only if} \,\, g(b)=0 \,\, \text{almost surely}.
    \end{equation*}
    \label{asm.comp}
\end{assumption}

\begin{remark}
    Roughly, Assumption~\ref{asm.tv-smooth} requires the continuity of the structural function $f_i$ and Assumption~\ref{asm.comp} requires that the characterization function of the exogenous noise $N_i$ is non-zero. Please refer to \citep{liu2023causal} for a detailed discussion.
\end{remark}

Under the above assumptions, \citep{liu2023causal} showed that:

\begin{theorem}[\citep{liu2023causal}]
Suppose that $A$ and $B$ are mediated by an unobserved variable $M$, and that $M$ has a proxy variable $M^\prime$ satisfying $A\ind_G M^\prime|M$. Then, we can identify whether $A\to B$ by testing whether $pr(b|a)\sim pr(m^\prime|a)$ has a linear relation. If the linear relation exists, then $A\not\to B$, otherwise, we have$A\to B$.
\label{thm.proxy}
\end{theorem}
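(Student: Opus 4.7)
The plan is to split the biconditional into its two directions and handle each via the conditional-independence structure together with Assumptions~\ref{asm.tv-smooth} and \ref{asm.comp}. First, I would use Markovness and the hypothesis $A \ind_G M' | M$ to obtain $pr(m'|a) = \int pr(m'|m) pr(m|a) \, dm$. When $A \not\to B$, the only active causal route from $A$ to $B$ passes through $M$, so $A \ind B | M$ and hence $pr(b|a) = \int pr(b|m) pr(m|a) \, dm$ as well. Both observed marginals thus arise as linear images of the common unobserved function $pr(m|a)$ under the integral kernels $pr(m'|m)$ and $pr(b|m)$, respectively.

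Next, I would invoke the completeness of Assumption~\ref{asm.comp} applied to the pair $M \to M'$ to secure a bridge function $h(b,m')$ satisfying $pr(b|m) = \int h(b,m') pr(m'|m) \, dm'$ for almost every $b,m$. Existence of $h$ is a Picard-type solvability condition on the operator induced by $pr(m'|m)$, and the TV-Lipschitzness (Assumption~\ref{asm.tv-smooth}) supplies the regularity needed to treat $h$ as an ordinary measurable function rather than a distribution. Substituting this representation into the expression for $pr(b|a)$ and exchanging the order of integration via Fubini produces $pr(b|a) = \int h(b,m') pr(m'|a) \, dm'$, i.e., the claimed linear relation witnessed by a single operator $h$ that does not depend on $a$.

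For the converse, suppose $A \to B$ holds directly. Then $pr(b|a) = \int pr(b|m,a) pr(m|a) \, dm$ with $pr(b|m,a)$ genuinely depending on $a$. If a linear relation $pr(b|a) = \int \tilde h(b,m') pr(m'|a) \, dm'$ also held, defining $g(b,m) := \int \tilde h(b,m') pr(m'|m) \, dm'$ and expanding $pr(m'|a)$ as $\int pr(m'|m) pr(m|a) \, dm$ would give $\int [pr(b|m,a) - g(b,m)] pr(m|a) \, dm = 0$ for every $a$. Completeness of $pr(m|a)$ (applied to the pair $A \to M$) then forces $pr(b|m,a) = g(b,m)$ almost surely, contradicting the genuine $a$-dependence on the left.

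The main obstacle is the existence of the bridge function $h$ in the forward direction: this is a Fredholm equation of the first kind which need not be solvable without further structure, and one must carefully combine the completeness of $pr(m'|m)$ with the TV-Lipschitz smoothness of $pr(b|m)$ to ensure a usable $h$ actually exists. Since the theorem is quoted from \citep{liu2023causal}, I would import the bridge-function construction developed there wholesale and devote the remaining effort to checking that our subsampled time-series setup, with the chosen proxy $M'$ lying further downstream of $M$ in the full time DAG, satisfies the abstract hypotheses behind that construction.
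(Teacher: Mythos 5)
This theorem is imported verbatim from \citep{liu2023causal}; the paper you are reading gives no proof of it (its appendix only proves Remark~\ref{remark.mag}, Propositions~\ref{prop.iden mag} and~\ref{prop.sdag and mag}, and Theorem~\ref{thm.iden sdag}), so there is no in-paper argument to compare against line by line. Judged on its own terms, your forward direction is the standard proximal argument and is sound modulo the point you yourself flag: completeness of $\mathbb{P}(M'\mid M)$ gives injectivity of the integral operator $g\mapsto\int g(m')\,pr(m'\mid m)\,dm'$, hence uniqueness of a bridge function, but not existence; existence needs the Picard range condition (or a discretized, square-and-invertible kernel, which is how the cited work actually proceeds, with Assumption~\ref{asm.tv-smooth} controlling the discretization error rather than the regularity of $h$). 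Deferring that to the cited construction is acceptable for a quoted theorem.

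The genuine gap is in your converse. From $\int\bigl[pr(b\mid m,a)-g(b,m)\bigr]pr(m\mid a)\,dm=0$ for every $a$ you invoke completeness of $\mathbb{P}(M\mid A)$ to conclude $pr(b\mid m,a)=g(b,m)$. But Assumption~\ref{asm.comp} states that $E\{g(M)\mid a\}=0$ for almost every $a$ forces $g\equiv 0$ only for a \emph{single} function $g$ of $m$ that does not vary with the conditioning value. Your integrand $f_a(m):=pr(b\mid m,a)-g(b,m)$ depends on $a$, and you only know $\int f_a(m)\,pr(m\mid a)\,dm=0$ on the diagonal, i.e.\ one scalar constraint per value of $a$ against an unknown that itself carries a free $a$-index. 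Completeness gives you nothing here: it is easy to build $a$-dependent families $f_a\not\equiv 0$ orthogonal to $pr(\cdot\mid a)$ at each $a$. This direction of the biconditional is exactly the delicate half in the proximal-discovery literature and is handled in \citep{liu2023causal} by a different route (a rank/faithfulness-type genericity argument on the discretized system, not a pointwise completeness deduction). As written, your proof establishes ``no direct edge $\Rightarrow$ linear relation'' but not the claimed converse, so the test's soundness for \emph{detecting} $A\to B$ is unproven in your reconstruction.
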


\begin{remark}
The linear relation can be tested by checking the least square residual of regressing $pr(b|a)$ on $pr(m^\prime|a)$. Please refer to \citep{liu2023causal} for details.
\end{remark}

To ensure the existence of the proxy variable, we require the following self causation assumption, which states that each variable $X_i(t)$ is influenced by its past $X_i(t-1)$.

\begin{assumption}[Self causation]
In model (\ref{eq.svar model}), $X_i \in \mathbf{PA}_i$ for each $i=1,...,d$.
\label{asm.self cause}
\end{assumption}

\begin{remark}
    In time series statistics, this is also known as the autocorrelation assumption \cite{bartlett1946theoretical,lomnicki1957estimation}. 
\end{remark}

\section{Structural identifiability}
\label{sec.identify}

In this section, we establish the identifiability of the summary graph. For this purpose, we connect the observational distribution to the summary graph with the bridge of the MAG. 

Specifically, our analysis consists of three progressive results: Proposition~\ref{prop.iden mag}, Proposition~\ref{prop.sdag and mag}, and Theorem~\ref{thm.iden sdag}. First, in Proposition~\ref{prop.iden mag}, we prove that the MAG can be identified from observational data. Then, in Proposition~\ref{prop.sdag and mag}, we show that the identified MAG \emph{almost uniquely} reflects the structure of the summary graph, in the sense that only edges connecting vertices and their ancestors cannot be determined due to hidden mediators. Finally, in Theorem~\ref{thm.iden sdag}, we identify these underdetermined edges with the proxies of the hidden mediators and therefore identify the whole summary graph.

\newpage
Next, we first introduce Proposition~\ref{prop.iden mag}, which shows that the MAG is identifiable.

\begin{proposition}[Identifiability of the MAG]
Assuming model (\ref{eq.svar model}) and Assumption~\ref{asm.mark and faith}, then the MAG over the observed variable set $\mathbf{O}$ is identifiable, i.e., its skeleton and edge orientations can be uniquely derived from the joint distribution $\mathbb{P}(\mathbf{O})$.
\label{prop.iden mag}
\end{proposition}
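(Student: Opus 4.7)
My strategy is to recover the MAG in two phases: first the skeleton via conditional independence testing, and then the edge orientations via temporal precedence combined with standard FCI orientation rules.

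\emph{Skeleton.} By Assumption~\ref{asm.mark and faith}, every conditional independence in $\mathbb{P}(\mathbf{O})$ is equivalent to a d-separation in $G$. Since two observed vertices $A,B\in\mathbf{O}$ are adjacent in the MAG iff no observed conditioning set d-separates them in $G$ (equivalently, iff an inducing path exists), I would recover the adjacencies by testing $A\ind B\,|\,\mathbf{S}$ for every pair $A,B\in\mathbf{O}$ and every $\mathbf{S}\subseteq\mathbf{O}\setminus\{A,B\}$.

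\emph{Orientations.} I would combine three sources of information. (a) For an adjacency $A(t_i)-B(t_i)$ at the same observed time step, the fact that edges in $G$ run strictly from time $t-1$ to time $t$ forbids either endpoint from being an ancestor of the other, forcing the bidirected mark $A(t_i)\leftrightarrow B(t_i)$. (b) For an adjacency $A(t_i)-B(t_j)$ with $t_i<t_j$, temporal precedence rules out $B(t_j)\in\mathbf{An}(A(t_i))$ in $G$, so the $A$-side of the edge must be a tail, and only $A(t_i)\to B(t_j)$ and $A(t_i)\leftrightarrow B(t_j)$ remain possible. (c) For any unshielded triple $(A,C,B)$ in the skeleton with $A,B$ non-adjacent, $C$ is a collider on that triple iff no d-separator of $A$ and $B$ contains $C$, a condition testable from $\mathbb{P}(\mathbf{O})$ by Assumption~\ref{asm.mark and faith}. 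Propagating the resulting collider orientations through Zhang's complete set of FCI rules then fixes the remaining arrowhead/tail marks.

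\emph{Main obstacle.} The delicate step is to argue that (a)--(c) together leave no residual ambiguity, so that the output is the MAG itself rather than merely its Markov-equivalent PAG. Step (b) already eliminates every circle mark at the earlier endpoint of a cross-time edge, so only the later endpoint can remain undetermined after standard FCI. For this residual ambiguity, I would invoke Assumption~\ref{asm.self cause}: self-causation supplies each variable with earlier observed copies whose CI patterns with the adjacent vertex yield additional unshielded triples, producing v-structures that discriminate the $\to$ case (directed path through hidden mediators) from the $\leftrightarrow$ case (pure hidden confounding). Verifying that this suffices uniformly across every configuration of the full time DAG and every subsampling interval $k$ is the step I expect to demand the most care.
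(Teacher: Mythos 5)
There is a genuine gap in the orientation phase. The skeleton recovery and the observation that instantaneous edges must be bidirected match the paper. But for lagged edges you only get as far as ``temporal precedence leaves $A(t_i)\to B(t_j)$ or $A(t_i)\leftrightarrow B(t_j)$'' and then defer the resolution to unshielded-collider detection, Zhang's FCI rules, and Assumption~\ref{asm.self cause}. This is where the argument breaks down: (i) FCI with those orientation rules is complete only for the \emph{Markov equivalence class} (the PAG), so you have not shown the residual circle marks actually disappear --- you explicitly flag this as the step ``demanding the most care'' without supplying it, and that step is precisely the content of the proposition; (ii) the proposition's hypotheses are only model~(\ref{eq.svar model}) and Assumption~\ref{asm.mark and faith}, so invoking Assumption~\ref{asm.self cause} imports an assumption you are not granted here; (iii) a small sign error: ruling out $B(t_j)\in\mathbf{An}(A(t_i))$ forces an \emph{arrowhead at $B(t_j)$}, not a tail at $A(t_i)$ --- a tail at $A(t_i)$ is equivalent to $A(t_i)\in\mathbf{An}(B(t_j))$, which is exactly what remains to be proved.

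The paper closes this gap by a purely structural argument, with no appeal to orientation rules or extra assumptions: it shows that in a first-order SVAR observed every $k$ steps, \emph{every} lagged adjacency in the MAG joins vertices exactly $k$ apart and is necessarily directed, i.e.\ $A(t)\leftrightarrow B(t+k)$ is impossible. The mechanism is that any inducing path between $A(t)$ and $B(t+k)$ that is not witnessing $A(t)\in\mathbf{An}(B(t+k))$ must either (a) reach a time $\ge t+k$, where Lemma~A.1 forces a collider that cannot be an ancestor of either endpoint, or (b) dip to a time $\le t$, where it must pass through an \emph{observed} vertex that is a non-collider --- both of which violate the definition of an inducing path. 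You should replace your step (c) with this case analysis; once it is in place, orientation is read off directly from the time stamps (same time $\Rightarrow$ bidirected, different times $\Rightarrow$ directed from past to future) and no collider detection or rule propagation is needed.
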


Since the identifiability of the MAG's skeleton under the faithfulness assumption is a well-known result \cite{spirtes1999computation}, we focus on explaining the identification of edge orientations. Specifically, we will divide the edges into two classes: the instantaneous edges and the lagged edges, and discuss their orientations respectively. 

The \emph{instantaneous edge}, \emph{e.g.}, $A(t_3)\leftrightarrow M(t_3)$, is an edge that connects two vertices at the same time. Since we assume cause must precede effect, the instantaneous edge does not represent causation but latent confounding, therefore is bidirected according to Definition~\ref{def.mag}. For example, in Figure~\ref{fig.illustration graphs} (c), the instantaneous edge $A(t_3)\leftrightarrow M(t_3)$ represents the latent confounding $A(t_3) \leftarrow A(t_2) \to M(t_3)$ between $A(t_3)$ and $M(t_3)$. On the other hand, the \emph{lagged edge}, \emph{e.g.}, $A(t_1)\to M(t_3)$, is an edge that connects two vertices at different time. The lagged edge represents ancestral information and hence is directed ($\to$) from the past to the future. 

To connect the identified MAG to the summary graph, we first define the following graph structures, examples of which are shown in Figure~\ref{fig.graph structs}:

\begin{definition}[Graph structures]
In the summary graph,
\begin{enumerate}[leftmargin=25pt,itemsep=0.15mm]
    \item A \emph{directed path} $p_{AB}$ from $A$ to $B$ with length $l$ is a sequence of distinct vertices $A,V_1,...,V_l,B$ where each vertex points to its successor. Two directed paths $p_{A B_1}, p_{A B_2}$ are called \emph{disjoint} if they do not share any non-startpoint vertex.
    \item A \emph{confounding structure} $c_{AB}$ between $A$ and $B$ with lengths $(r,q)$ consists of a vertex $U$, a directed path $p_{UA}$ from $U$ to $A$ with length $r$, and a directed path $p_{UB}$ from $U$ to $B$ with length $q$, where $p_{UA}$ and $p_{UB}$ are disjoint.
\end{enumerate}
\label{def.conf struct}
\end{definition}

\begin{figure}[tp]
    \centering
    \includegraphics[width=.85\textwidth]{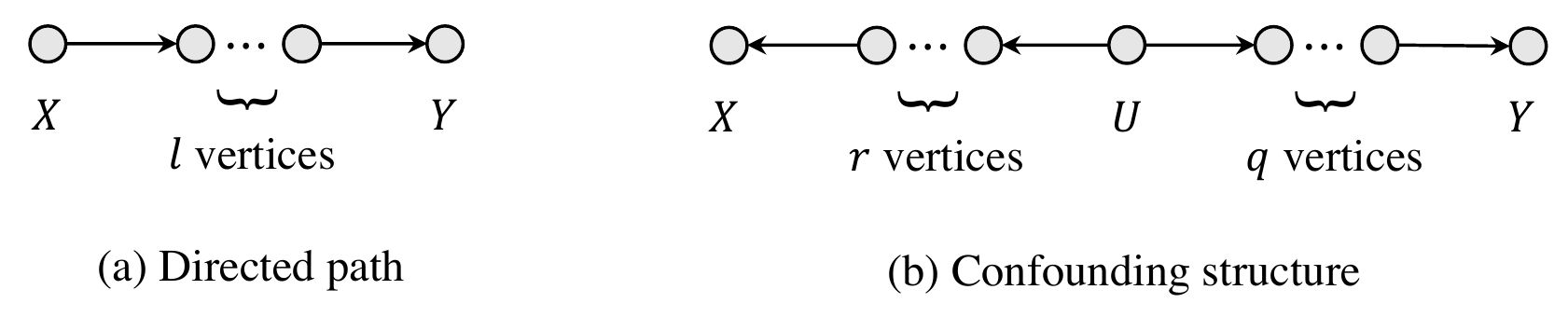}
    \caption{Illustration of Definition~\ref{def.conf struct}. (a) A directed path from $A$ to $B$ with length $l$. (b) A confounding structure between $A$ and $B$ with lengths $(r,q)$.}
    \label{fig.graph structs}
\end{figure}

In the following, we explain the relationship between the identified MAG and the summary graph. In particular, we will discuss what the directed and bidirected edges in the MAG (\emph{resp.}) imply about the summary graph. 

According to Definition~\ref{def.mag}, the \emph{directed edge} $A(t)\to B(t+k)$ in the MAG means $A$ is the ancestor of $B$. Therefore, in the summary graph, there is either $A\to B$ or a directed path from $A$ to $B$. On the other hand, the \emph{bidirected edge} $A(t+k)\leftrightarrow B(t+k)$ in the MAG means there is a latent confounder between them, which can be either $A(t+1)$, \emph{i.e.}, $A(t+k) \doubleleftarrows A(t+1) \doubleto B(t+k)$, or a third variable $U(t+1)$, \emph{i.e.}, $A(t+k) \doubleleftarrows U(t+1) \doubleto B(t+k)$. Hence, in the summary graph, there is either a directed path from $A$ to $B$ or a confounding structure between them. 

Summarizing the above observations, we have the following proposition:

\begin{proposition}[MAG to summary graph]
If there are $A(t) \to B(t+k)$ and $A(t+k) \leftrightarrow B(t+k)$ in the MAG, then, in the summary graph, there is either 

% \begin{enumerate}[leftmargin=25pt]
%     \item $A \to B$,
%     \item a directed path from $A$ to $B$ with length $l\leq k-2$, or
%     \item a directed path from $A$ to $B$ with length $l=k-1$ and a confounding structure between them with lengths $(r\leq k-2,q\leq k-2)$.
% \end{enumerate}

\quad\,
\begin{enumerate*}[series=tobecont,itemjoin=\quad]
    \item $A \to B$, \item a directed path from $A$ to $B$ with length $l\leq k-2$, or
\end{enumerate*}
\begin{enumerate}[resume=tobecont, ,topsep=0pt, partopsep=0pt, leftmargin=25pt]
    \item a directed path from $A$ to $B$ with length $l=k-1$ and a confounding structure between them with lengths $(r\leq k-2,q\leq k-2)$.
\end{enumerate}
\label{prop.sdag and mag}
\end{proposition}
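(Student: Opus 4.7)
The plan is to translate each MAG edge into structural information on the full time DAG via Definition~\ref{def.mag}, then project this structure onto the summary graph.

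First, I would handle the directed edge $A(t) \to B(t+k)$. By Definition~\ref{def.mag}(1), this edge implies $A(t) \in \mathbf{An}(B(t+k))$ in the full time DAG. Since every full-time-DAG edge advances time by exactly one step, the ancestral relation is witnessed by a directed path of length exactly $k$, namely $A(t) \to U_1(t+1) \to \cdots \to U_{k-1}(t+k-1) \to B(t+k)$. Dropping the time stamps yields a directed walk $A \to U_1 \to \cdots \to U_{k-1} \to B$ of length $k$ in the summary graph, where each step is either a cross-variable edge or a self-loop implied by Assumption~\ref{asm.self cause}. Extracting a directed path by collapsing repeated vertices and short-circuiting cycles, I obtain a directed path from $A$ to $B$ with $l \in \{0, 1, \ldots, k-1\}$ intermediate vertices. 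If $l = 0$ this is case (1); if $1 \leq l \leq k-2$ this is case (2).

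Next, in the residual case $l = k-1$, I would exploit the bidirected edge $A(t+k) \leftrightarrow B(t+k)$. By Definition~\ref{def.mag}(3) and the defining property of a MAG, there is an inducing path $\pi$ in the full time DAG between $A(t+k)$ and $B(t+k)$ whose non-endpoint vertices are all latent or colliders ancestral to an endpoint. Because the endpoints are contemporaneous while edges advance time strictly forward, $\pi$ must bend back in time through at least one fork; let $U^\ast(s)$ be such a fork with directed sub-arms $U^\ast(s) \to \cdots \to A(t+k)$ and $U^\ast(s) \to \cdots \to B(t+k)$. On each sub-arm, non-endpoint vertices are non-colliders and hence latent; since observed-time vertices are never latent, the sub-arms remain inside the unobserved window $\{t+1, \ldots, t+k-1\}$, forcing $s \geq t+1$ and sub-arm length $t+k-s \leq k-1$. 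Projecting to the summary graph and extracting directed paths gives a triple $(U^\ast, p_{U^\ast A}, p_{U^\ast B})$ with path lengths $\leq k-2$; minimizing $r+q$ over all admissible triples both excludes $U^\ast \in \{A, B\}$ (otherwise the sub-arm gives an $A$-to-$B$ path of length $\leq k-2$, contradicting $l=k-1$) and forces the two paths to be disjoint (any shared non-apex vertex would produce a strictly shorter admissible triple via suffixes), yielding case (3).

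The hardest part will be producing this clean fork $U^\ast(s)$ with fully latent arms, since $\pi$ may contain collider-activated detours through observed-time vertices. I would handle this by induction on the number of colliders on $\pi$: each such collider is by definition an ancestor of $A(t+k)$ or $B(t+k)$, so the subpath of $\pi$ through it can be replaced by the directed ancestral path from the collider to the corresponding endpoint, producing a strictly simpler inducing path with one fewer collider. Iterating reduces $\pi$ to a fork-only confounder path, to which the preceding analysis applies verbatim.
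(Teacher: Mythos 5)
Your proposal is correct and follows essentially the same route as the paper's own proof: the directed MAG edge is turned into a directed path of length at most $k-1$ using the one-step-per-edge structure of the full time DAG, and the bidirected edge is turned into a latent confounder whose time index is trapped in the unobserved window $\{t+1,\dots,t+k-1\}$, with your collider-elimination induction playing the role of the paper's Algorithm for Remark 2.5. Your added care about disjointness of the two confounding arms and about excluding $U^\ast\in\{A,B\}$ goes slightly beyond what the paper writes down (and, like the paper, your handling of the case where the confounder coincides with $B$ implicitly leans on acyclicity of the summary graph), but the decomposition and key ideas are the same.
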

\vspace{+1em}

\begin{remark}
In Proposition~\ref{prop.sdag and mag}, the maximum length of the directed path is $k-1$, this is because any path longer than this can not be an inducing path and therefore can not induce $A(t)\to B(t+k)$ in the MAG. Please refer to the appendix for details.
\end{remark}

Proposition~\ref{prop.sdag and mag} provides a necessary condition for having $A\to B$ in the summary graph, \emph{i.e.}, there are both $A(t) \to B(t+k)$ and $A(t+k) \leftrightarrow B(t+k)$ in the MAG. It also inspires us that, to make the condition sufficient, we need to distinguish the direct effect ($A\to B$) from the indirect one ($A \doubleto B$).

For this purpose, we propose to use the proximal causal discovery method. Before preceding any technical detail, we first brief our idea below. The indirect effect, unlike the direct one, relies on mediation and therefore can be \emph{d}-separated by the (unobserved) mediator. To test the \emph{d}-separation, we can use the observed proxy of the mediator, thanks to the self causation assumption in Assumption~\ref{asm.self cause}. For example, in Figure~\ref{fig:sdag proxy} (b), the indirect effect $A(t_1)\to M(t_2) \to B(t_3)$ from $A$ to $B$ can be \emph{d}-separated by the mediator $M(t_2)$, who has an observed proxy $M(t_3)$.

For the general case, the separation set is a bit more complicated to ensure all paths except the one representing direct effect are \emph{d}-separated. Specifically, for two vertices $A(t)$ and $B(t+k)$, the separation set is the union of two sets $\mathbf{M}(t+1) \cup \mathbf{S}(t)$. The set $\mathbf{M}(t+1)$ contains possible mediators, namely $A(t+1)$ and any vertex $M_i(t+1)$ ($M_i \neq B$) such that $A(t)\to M_i(t+k)$ in the MAG and $M_i$ is not $B$'s descendant\footnote{This can be justified from the MAG. Specifically, if $M_i$ is the descendant of $B$, then there is $B(t) \to V_1(t+k), V_1(t) \to V_2(t+k), ..., V_l(t)\to M_i(t+k)$ in the MAG.}. The set $\mathbf{S}(t)$ is used to \emph{d}-separate possible back-door paths between $A(t)$ and $B(t+k)$, it contains any vertex $S_i(t)$ ($S_i \neq A$) such that $S_i(t) \to B(t+k)$ or $S_i(t) \to M_j(t+k)$ for some $M_j \in \mathbf{M}$ in the MAG. 

Equipped with the separation set, we then have the following identifiability result:

\begin{figure}[tp]
    \centering
    \includegraphics[width=\textwidth]{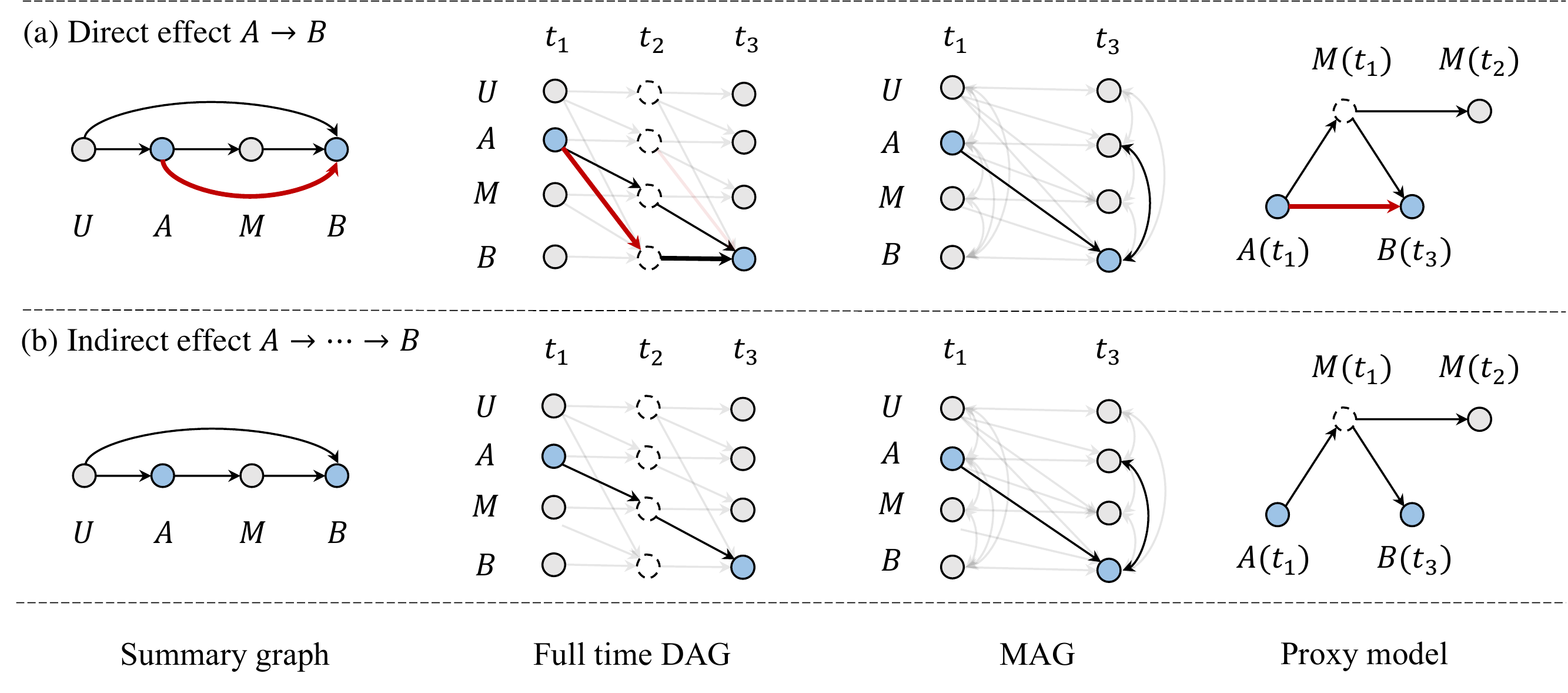}
    \caption{Distinguishing the direct effect ($A\to B$) from the indirect one ($A\doubleto B$) with proxy variables. Note that though (a) and (b) have the same MAG, only the indirect effect $A(t_1)\to M(t_2) \to B(t_3)$ in (b) can be \emph{d}-separated by $M(t_2)$. To test the \emph{d}-separation, we can use $M(t_3)$ as the proxy variable.}
    \label{fig:sdag proxy}
\end{figure}

\begin{theorem}[Identifiability of the summary graph]
Assuming model (\ref{eq.svar model}), Assumption~\ref{asm.mark and faith}, and Assumptions~\ref{asm.tv-smooth}, \ref{asm.comp}, \ref{asm.self cause}, then the summary graph is identifiable. Specifically,
\begin{enumerate}[leftmargin=25pt,itemsep=0.15mm]
    \item There is $A\to B$ in the summary graph iff there are $A(t)\to B(t+k), A(t+k)\leftrightarrow B(t+k)$ in the MAG, and the set $\mathbf{M}(t+1)\cup \mathbf{S}(t)$ is not sufficient to d-separate $A(t),B(t+k)$ in the full time DAG. 
    \item The condition ``the set $\mathbf{M}(t+1)\cup \mathbf{S}(t)$ is not sufficient to d-separate $A(t),B(t+k)$ in the full time DAG'' can be tested by the proxy variable $\mathbf{M}(t+k)$ of the unobserved set $\mathbf{M}(t+1)$.
\end{enumerate}
\label{thm.iden sdag}
\end{theorem}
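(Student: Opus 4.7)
The plan is to decompose the statement into the iff claim of Part 1 and the proxy-based testability of Part 2, treating each direction of the iff separately using Proposition~\ref{prop.sdag and mag} as the structural bridge.

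For the forward direction of Part 1 ($A\to B$ in the summary graph $\Rightarrow$ the three conditions), I would exhibit explicit paths in the full time DAG. By Assumption~\ref{asm.self cause} the chain $A(t)\to A(t+1)\to\cdots\to A(t+k-1)\to B(t+k)$ has all interior vertices unobserved and lying on a directed path to $B(t+k)$, making it an inducing path that yields $A(t)\to B(t+k)$ in the MAG; the fork $A(t+k)\leftarrow A(t+k-1)\to B(t+k)$ analogously yields $A(t+k)\leftrightarrow B(t+k)$. For non-separation, the path $A(t)\to B(t+1)\to B(t+2)\to\cdots\to B(t+k)$ remains active under $\mathbf{M}(t+1)\cup\mathbf{S}(t)$, since $\mathbf{M}(t+1)$ explicitly excludes $B$ and $\mathbf{S}(t)$ lives only at time $t$, so no interior vertex falls in the conditioning set.

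For the reverse direction, Proposition~\ref{prop.sdag and mag} restricts the summary graph to three configurations; I need to show that whenever $A\to B$ is absent (cases (ii) and (iii) there), the set $\mathbf{M}(t+1)\cup \mathbf{S}(t)$ actually d-separates $A(t)$ from $B(t+k)$. Any front-door path $A(t)\to V_1(t+1)\to\cdots\to B(t+k)$ has its earliest interior vertex $V_1(t+1)$ in $\mathbf{M}(t+1)$: the self-causation chain $V_1(t+1)\to\cdots\to V_1(t+k)$ combined with $A\to V_1$ in the summary graph furnishes an inducing path giving $A(t)\to V_1(t+k)$ in the MAG, while acyclicity prevents $V_1$ from being a descendant of $B$. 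Back-door paths entering $A(t)$ from a confounder in case (iii) similarly have their first vertex $S_i(t)$ placed into $\mathbf{S}(t)$, because the corresponding inducing path produces an edge $S_i(t)\to B(t+k)$ or $S_i(t)\to M_j(t+k)$ in the MAG. I expect this delicate case analysis --- verifying that all active paths are blocked and that no collider is inadvertently opened by the conditioning --- to be the main obstacle, since one must carefully distinguish which MAG edges originate from mediation versus confounding.

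For Part 2, the strategy is to invoke Theorem~\ref{thm.proxy}. The unobserved set $\mathbf{M}(t+1)$ admits the observed set $\mathbf{M}(t+k)$ as a proxy by Assumption~\ref{asm.self cause}, which supplies the chain $M_i(t+1)\to M_i(t+2)\to\cdots\to M_i(t+k)$ for each coordinate; Markovianity on the full time DAG then gives the exogeneity condition $A(t)\ind_G \mathbf{M}(t+k)\mid \mathbf{M}(t+1)$ required by Theorem~\ref{thm.proxy}. Under Assumptions~\ref{asm.tv-smooth} and \ref{asm.comp}, Theorem~\ref{thm.proxy} then converts the non-separation criterion into the observable linearity check on $pr(b\mid a)\sim pr(m'\mid a)$, completing the identification of the summary graph.
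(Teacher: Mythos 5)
Your treatment of Part 1 follows essentially the paper's own route. The forward direction uses the same witnesses (the all-$A$ inducing chain and fork for the two MAG edges, and the path $A(t)\to B(t+1)\to\cdots\to B(t+k)$ for non-separation, active precisely because $B\notin\mathbf{M}$ and $\mathbf{S}$ lives at time $t$). The reverse direction is argued by the same contrapositive, with front-door entry points $V_1(t+1)$ absorbed into $\mathbf{M}(t+1)$ and back-door entry points $S_i(t)$ into $\mathbf{S}(t)$. You leave the residual cases --- paths wandering to times $\geq t+k$, and continuations that are not directed --- as an acknowledged obstacle; the paper closes them with Lemma~\ref{lemma.future path collider}, which forces a collider at time $\geq t+2$ on any such path, and no such collider (nor any of its descendants) can lie in a conditioning set living only at times $t$ and $t+1$. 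Incomplete, but the right skeleton.

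The genuine gap is in Part 2. The exogeneity condition you invoke, $A(t)\ind_G \mathbf{M}(t+k)\mid\mathbf{M}(t+1)$, is not the condition the argument needs, and it is generally false: a back-door path such as $A(t)\leftarrow\cdots\leftarrow U\to\cdots\to S(t)\to V(t+1)\to\cdots\to M_i(t+k)$ with $V\notin\mathbf{M}$ bypasses $\mathbf{M}(t+1)$ entirely and remains open. The statement the paper actually proves is $A(t)\ind_G \mathbf{M}(t+k)\mid\mathbf{M}(t+1),\mathbf{S}(t)$; the set $\mathbf{S}(t)$ is there precisely to block these paths (note $S\in\mathbf{S}$ because the directed inducing path from $S(t)$ to $M_i(t+k)$ puts $S(t)\to M_i(t+k)$ in the MAG). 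Moreover, appealing to ``Markovianity'' does no work here: Markovianity only converts a d-separation into an independence, and the d-separation itself requires its own path-by-path case analysis, parallel to but distinct from the one in Part 1. In particular, the paper must separately handle the case where $A\to B$ \emph{does} hold in the summary graph, where paths of the form $A(t)\to B(t+1),\ldots,Z_i(t+k)$ arise; there one needs the defining property $Z_i\notin\mathbf{De}(B)$ of $\mathbf{M}$ to conclude that such a path cannot be directed and therefore contains a blocking collider. Without establishing this conditional independence in full, $\mathbf{M}(t+k)$ is not a licensed proxy and Theorem~\ref{thm.proxy} cannot be applied.
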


Theorem~\ref{thm.iden sdag} provides a necessary and sufficient condition for having $A\to B$ in the summary graph. This condition is testable with the proxy $\mathbf{M}(t+k)$ of the unobserved set $\mathbf{M}(t+1)$ ($\mathbf{S}(t)$ is observable). The validity of such proxying, \emph{i.e.}, $A(t) \ind_G \mathbf{M}(t+k) | \mathbf{M}(t+1), \mathbf{S}(t)$ is proved in the appendix.

\section{Discovery algorithm}
\label{sec.alg}

Equipped with the identifiability results in Proposition~\ref{prop.iden mag} and Theorem~\ref{thm.iden sdag}, we can practically discover the summary graph with Algorithm~\ref{alg:iden sdag}. Specifically, given the observed data points, Algorithm~\ref{alg:iden sdag} first recovers the MAG according to Proposition~\ref{prop.iden mag}. Then, for each pair of vertices in the summary graph, it first uses the necessary condition of ``having $A(t)\to B(t+k)$ and $A(t+k)\leftrightarrow B(t+k)$ in the MAG'' to construct underdetermined edges. Finally, among these underdetermined edges, it uses the proxy variable approach to identify direct causation and remove the indirect one, according to Theorem~\ref{thm.iden sdag}.

Below, we first introduce the \emph{Partially Determined Directed Acyclic Graph (PD-DAG)}, which will be used to represent the intermediate result of the algorithm. 

\begin{definition}[PD-DAG]
A PD-DAG is a DAG with two kinds of directed edges: solid ($\to$) and dashed ($\dashrightarrow$). A solid edge represents a determined causal relation, while a dashed one means the causal relation is still underdetermined. We use a meta-symbol, asterisk ($\starto$), to denote any of the two edges. In a PD-DAG $G=(\mathbf{V},\mathbf{E})$, the vertex $A$ points to the vertex $B$ if the edge $A \starto B \in \mathbf{E}$. The definition of the directed path and confounding structure are the same as in the summary graph.
\end{definition}

\SetKwComment{Comment}{/* }{ */}

\begin{algorithm}[h!]
\caption{Discover the summary graph}
\setstretch{1.2}
\label{alg:iden sdag}
\KwData{Data observed at $t=1,k+1,...,\lfloor\frac{T}{k}\rfloor k+1$}
\KwResult{The summary graph}
Construct the skeleton of the MAG with \cite{spirtes2000causation}, orient edges according to Proposition~\ref{prop.iden mag};

For every vertex pair $A,B$, set $A\starto B$ if there are $A(t)\to B(t+k)$ and $A(t+k)\leftrightarrow B(t+k)$ in the MAG. Called the resulted PD-DAG $G_0$ \Comment*[r]{Proposition~\ref{prop.sdag and mag}}

Set $G=G_0$ and iteratively execute:
\begin{enumerate}[label=(\alph*)]
    \item For every $A\starto B$, check
    \begin{enumerate}[label=(\roman*)]
        \item a directed path from $A$ to $B$ with length $l\leq k-2$;
        \item a directed path from $A$ to $B$ with length $l\leq k-1$ and a confounding structure between $A$ and $B$ with lengths $(r\leq k-2, q\leq k-2)$;
    \end{enumerate}
    
    If $\exists\,$ (i) or (ii), pass. Otherwise, set $A\to B$ \Comment*[r]{Proposition~\ref{prop.sdag and mag}}

    \item Randomly pick a dashed edge $A\starto B$, check whether $A(t)\ind_G B(t+k) | \mathbf{M}(t+1), \mathbf{S}(t)$;

    If the \emph{d}-separation holds, remove $A\starto B$. Otherwise, set $A\to B$ \Comment*[r]{Theorem~\ref{thm.iden sdag}}
\end{enumerate}

until all edges in $G$ are solid ones;
\Return $G$
\end{algorithm}

\begin{remark}
Algorithm~\ref{alg:iden sdag} can be generalized to cases where the subsampling factor $k$ is unknown, by skipping the step (a) below line 3.
\end{remark}

\section{Experiment}
\label{sec.exp}

In this section, we evaluate our method on synthetic data and a real-world application, \emph{i.e.}, discovering causal pathways in Alzheimer's disease.

\noindent\textbf{Compared baselines.} \emph{Methods that account for subsampling}:
\begin{enumerate*}[series=tobecont,itemjoin=\,\,]
    \item SVAR-FCI \cite{malinsky2018causal} that extended the FCI algorithm to time series and recovered a MAG over observed variables; \item NG-EM \cite{gong2015discovering} that achieved identifiability with linear non-Gaussianity and used Expectation Maximization (EM) for estimation.
\end{enumerate*}
\emph{Methods that neglect subsampling}: 
\begin{enumerate*}[series=tobecont,itemjoin=\,\,]
    \item Dynotears \cite{pamfil2020dynotears} that extended the score-based Notears \cite{zheng2018dags} algorithm to the time series; \item PC-GCE \cite{assaad2022discovery} that modified the PC algorithm for temporal data with a new information-theoretic conditional independence (CI) test.
\end{enumerate*}

\noindent\textbf{Metrics.} We use the $\mathrm{F}_1$-score, precision, and recall, where precision and recall (\emph{resp.}) measure the accuracy and completeness of identified causal edges, and $\mathrm{F}_1:=2\cdot \frac{\mathrm{precision}\cdot\mathrm{recall}}{\mathrm{precision}+\mathrm{recall}}$.

\noindent\textbf{Implementation details.} The significance level is set to $0.05$. For the MAG recovery, we use the FCI algorithm implemented in the $\mathrm{causallearn}$ package\footnote{\url{https://github.com/py-why/causal-learn}}. Temporal constraints, \emph{e.g.}, causal precedence, time invariance, and self causation are added as the background knowledge. 

\subsection{Synthetic study}
\label{sec.exp sim}

\indent\textbf{Data generation.} We generate radnom summary graphs with the Erdos-Renyi model \cite{erdHos1960evolution}, where the vertex number is set to $5$, the probability of each edge is set to $0.3$. For each graph, we generate temporal data with the structural equation $X_i(t)=\sum_{j\in \mathbf{PA}_i} f_{ij}(X_j(t-1))+N_i$, where the function $f_{ij}$ is randomly chosen from $\{linear,sin,tanh,sqrt\}$, the exogenous noise $N_i$ is randomly sampled from $\{uniform,gauss.,exp.,gamma\}$. We consider different subsampling factors $k=\{2,3,4,5\}$ and sample sizes $n=\{600,800,1000,1200\}$. For each setting\footnote{We also consider different graph scales $d=\{5,15,25,35,45\}$. Please refer to the appendix for details.}, we replicate over $100$ random seeds.

\noindent\textbf{Comparison with baselines.} Figure~\ref{fig.cmp baselines} shows the performance of our method and baselines under different subsampling factors (upper row) and sample sizes (lower row). As we can see, our method significantly outperforms the others in all settings. Specifically, compared with methods that account for subsampling (SVAR-FCI and NG-EM), our method achieves both higher precision and recall, indicating less detection error and missing edges. This advantage can be attributed to the fact that our method enjoys an identifiability guarantee (\emph{v.s.} SVAR-FCI) and meanwhile requires no parametric assumption on the structural model (\emph{v.s.} NG-EM). Compared with methods that ignore subsampling (Dynotears and PC-GCE), our method achieves higher $\mathrm{F}_1$-score, precision, and is comparable in recall. This result shows that our method can effectively reduce the spurious detection induced by unobserved time steps. Besides, we can observe that Dynotears and PC-GCE slightly outperform SVAR-FCI and NG-EM, which again demonstrates the necessity of establishing nonparametric identification in subsampling problems.

\begin{figure}[tp]
    \centering
    \includegraphics[width=.85\textwidth]{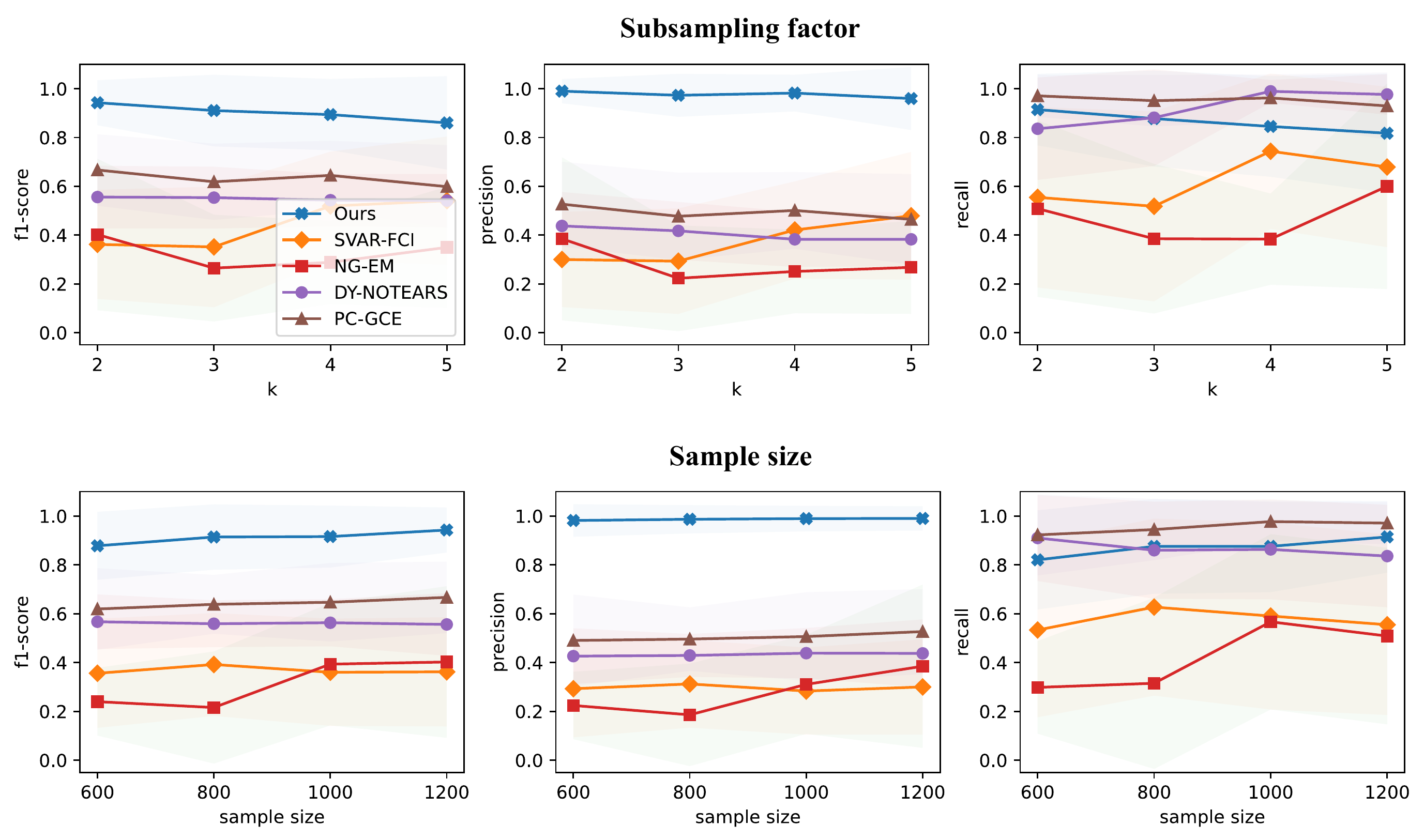}
    \caption{Performance of our method and baselines under different subsampling factors (upper row) and sample sizes (lower row).}
    \label{fig.cmp baselines}
\end{figure}

% \begin{figure}[tp]
%     \centering
%     \includegraphics[width=.65\textwidth]{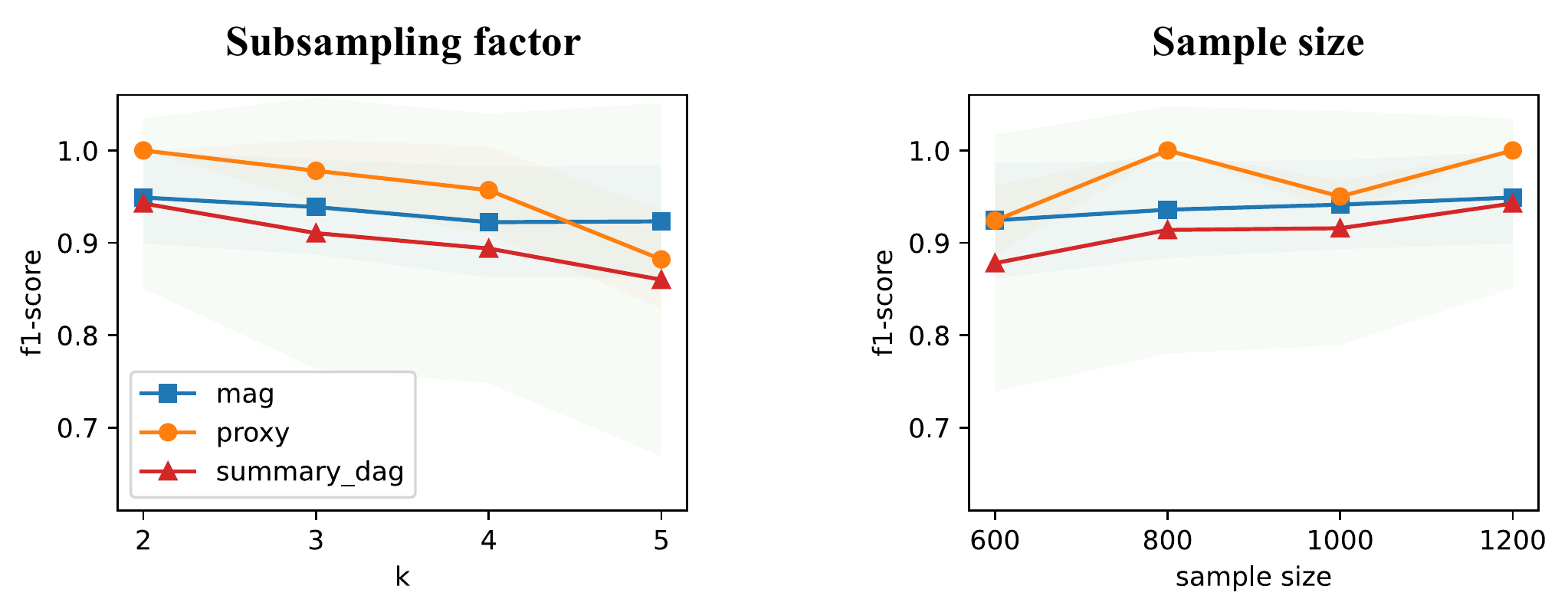}
%     \caption{Evaluation of the intermediate results. We report the $F_1$-score of the recovered MAG, edges identified by proxies, and the recovered summary graph. For the other metrics, please refer to the appendix.}
%     \label{fig.inter results}
% \end{figure}

\begin{figure}[tp]
    \centering
    \includegraphics[width=.85\textwidth]{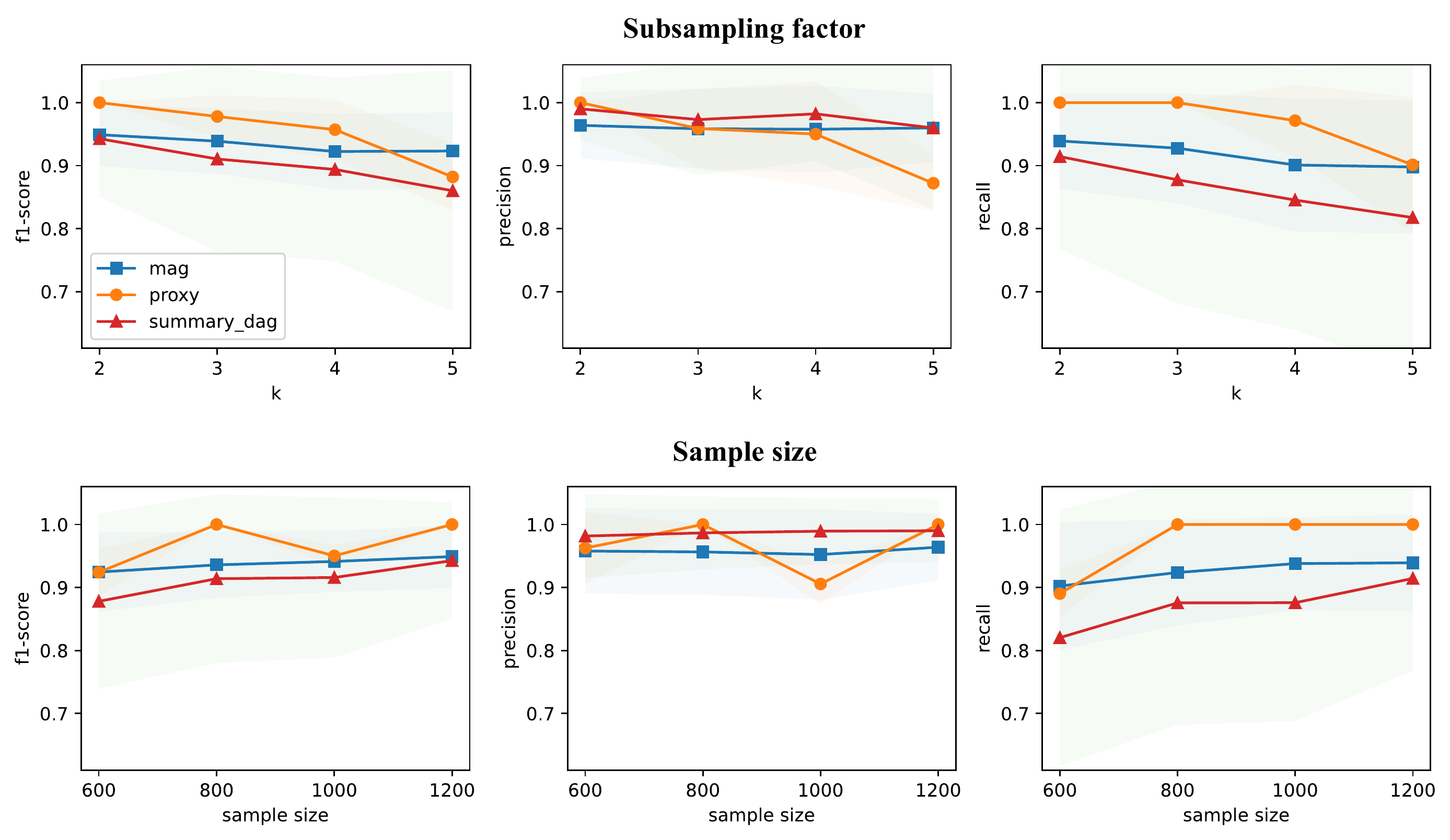}
    \caption{Evaluation of intermediate results. We report $F_1$-score, precision, and recall of the recovered MAG, edges identified by proxies, and the recovered summary graph.}
    \label{fig.inter results}
\end{figure}

\noindent\textbf{Intermediate results.} We evaluate the intermediate results of Algorithm~\ref{alg:iden sdag} (MAG, edges identified by proxies, summary graph) and report the results in Figure~\ref{fig.inter results}. As shown, both the recovered MAG and edges identified by proxies have high accuracy under moderate subsampling factors and sample sizes, hence explaining the effectiveness of our algorithm in recovering the summary graph. Meanwhile, we can also observe that the performance slightly decreases when the subsampling factor $k$ is large. This is explained as follows. When $k$ is large, there are many unobserved steps between two observations, which weakens the correlation pattern in data and therefore breaks our faithfulness assumption.

\subsection{Discovering causal pathways in Alzheimer's disease}
\label{sec.exp ad}

\noindent\textbf{Background.} Alzheimer's disease (AD) is one of the most common neuro-degenerative diseases. In AD, patients suffer from memory loss due to atrophy of memory-related brain regions, such as the Hippocampus \cite{west1994differences} and Temporal lobe \cite{jack1998rate}. A widely accepted explanation for such atrophy is: the disease releases toxic proteins, \emph{e.g.}, $A\beta$ \cite{laferla2007intracellular} and ${tau}$ \cite{kolarova2012structure}; along anatomical pathways, these proteins spread from one brain region to another, eventually leading to atrophy of the whole brain \cite{bloom2014amyloid}. Recovering these anatomical pathways, \emph{i.e.}, underlying causal mechanisms, will benefit the understanding of AD pathology and inspire potential treatment methods.

\noindent\textbf{Dataset and preprocessing.} We consider the Alzheimer's Disease Neuroimaging Initiative (ADNI) dataset \cite{petersen2010alzheimer}, in which the imaging data is acquired from structural Magnetic Resonance Imaging (sMRI) scans. We apply the Dartel VBM \cite{ashburner2010vbm} for preprocessing and the Statistical Parametric Mapping (SPM) \cite{friston2003statistical} for segmenting brain regions. Then, we implement the Automatic Anatomical Labeling (AAL) atlas \cite{tzourio2002automated} to partition the whole brain into $90$ regions. In total, we use $n=558$ subjects with baseline and month-6 follow-up visits enrolled in ADNI-GO/1/2/3 periods.

\begin{figure}[tp]
    \centering
    \includegraphics[width=.85\textwidth]{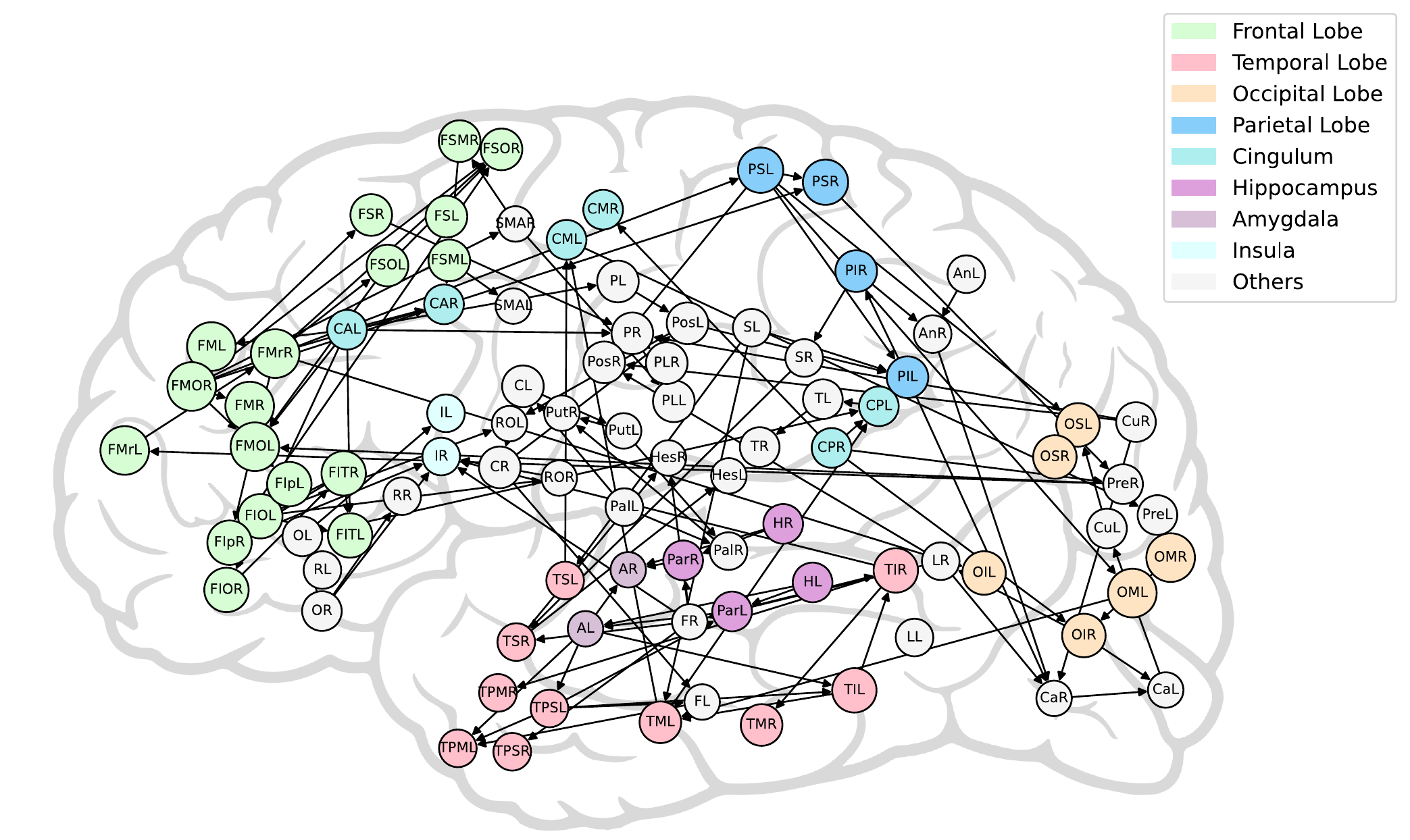}
    \caption{Recovered summary graph in Alzheimer's disease.}
    \label{fig.ad_90nodes}
\end{figure}

\noindent\textbf{Results.} Figure~\ref{fig.ad_90nodes} shows the recovered summary graph over $90$ brain regions. For illustration, in Figure~\ref{fig.ad_8nodes} (a), we further group the brain regions into $8$ meta-regions according to their anatomical structures. We can observe that \textbf{i)} most causal pathways between meta-regions are unidirectional; \textbf{ii)} the identified sources of atrophy are the Hippocampus, Amygdala, and Temporal lobe, which are all early-degenerate regions \cite{west1994differences,jack1998rate}; \textbf{iii)} the topology order induced from our result, \emph{i.e.}, $\{Hip.,Amy.,Tem.,Cin.,Ins.,Fro.,Par.,Occ.\}$ coincides with the temporal degeneration order found in \cite{young2018uncovering,vogel2021four}. These results constitute an important finding that can be supported by existing studies: the atrophy (releasing of toxic proteins) is sourced from the Hippocampus and gradually propagates to other brain structures along certain anatomical pathways \cite{bloom2014amyloid,liu2023whichinv}.

In contrast, in Figure~\ref{fig.ad_8nodes} (b), the causal relations recovered by the Dynotears baseline\footnote{Please refer to the appendix for results of the other baselines.} are less clear. For example, most of the identified interactions are bidirectional, which may be due to the spurious edges induced by subsampling. Besides, the identified atrophy source is the Frontal lobe, with AD-related regions identified as outcomes, which is inconsistent with clinical studies. These results, from another perspective, show the importance of modeling subsampling in time series causal discovery.

\section{Conclusion}
\label{sec.conclusion}

In this paper, we propose a causal discovery algorithm for subsampled time series. Our method leverages the recent progress of proximal causal discovery and can achieve complete identifiability without any parametric assumption. The proposed algorithm can outperform baselines on synthetic data and recover reasonable causal pathways in Alzheimer's disease. 

\noindent\textbf{Limitation and future work.} Our method relies on an accurate test of the conditional independence (CI) and therefore may suffer from low recall when the CI patterns in data are weak. To solve this problem, we will investigate theories on high-efficiency CI tests and pursue a dedicated solution.

\begin{ack}
    This work was supported by National Key R\&D Program of China (2022ZD0114900).
\end{ack}

\bibliographystyle{unsrt}
\bibliography{reference} 

% add the following to make toc work
\newpage
\appendix
\renewcommand{\contentsname}{Appendix}
\tableofcontents
\newpage
\addtocontents{toc}{\protect\setcounter{tocdepth}{2}}

\section{Preliminary}

\newcommand{\figillustrationgraphs}{2}
\newcommand{\remardmag}{2.5}
\newcommand{\thmproxy}{2.8}
\newcommand{\asmsmoothness}{2.6}
\newcommand{\asmmarkandfaith}{2.2}

\subsection{Explanation of Figure~\figillustrationgraphs (c): MAG}

\begin{table}[htp]
    \caption{Detailed explanation of edges in the MAG}
    \centering
    \begin{tabular}{c c}
    \toprule
    Edge in the MAG & Inducing path in the full time DAG\\
    \midrule
    $A(t_1)\to A(t_3)$ & $A(t_1)\to A(t_2)\to A(t_3)$ \\ 
    \midrule
    \multirow{2}{*}{$A(t_1)\to M(t_3)$} & $A(t_1)\to A(t_2)\to M(t_3)$ \\
                        &  $A(t_1)\to M(t_2)\to M(t_3)$ \\
    \midrule
    $A(t_1)\to B(t_3)$ & $A(t_1)\to M(t_2)\to B(t_3)$ \\
    \midrule
    $M(t_1)\to M(t_3)$ & $M(t_1)\to M(t_2)\to M(t_3)$\\
    \midrule
    \multirow{2}{*}{$M(t_1)\to B(t_3)$} & $M(t_1)\to M(t_2)\to B(t_3)$\\
                 & $M(t_1)\to B(t_2)\to B(t_3)$ \\
    \midrule
    $B(t_1)\to B(t_3)$ & $B(t_1)\to B(t_2)\to B(t_3)$ \\
    \midrule
    $A(t_3)\leftrightarrow M(t_3)$ & $A(t_3)\leftarrow A(t_2)\to M(t_3)$ \\
    \midrule
    $M(t_3)\leftrightarrow B(t_3)$ & $M(t_3)\leftarrow M(t_2)\to B(t_3)$ \\
    \bottomrule
    \end{tabular}
\end{table}

\subsection{Proof of Remark~\remardmag: Edge orientations in the MAG}

\noindent\textbf{Remark~\remardmag.}\emph{In the MAG, the directed edge $A \to B$ means $A$ is the ancestor of $B$. The bidirected edge $A \leftrightarrow B$ means there is an unobserved confounder $U$ between $A$ and $B$.}

\begin{proof}
    Since the meaning of the directed edge is directly induced from the definition, we focus on explaining the bidirected edge.

    Specifically, $A \leftrightarrow B$ means there is an inducing path $p$ relative to $\mathbf{L}$ between $A$ and $B$. Since $A \not \in \mathbf{An}(B)$ and $B \not \in \mathbf{An}(A)$, the inducing path $p$ must contains non-mediators, \emph{i.e.}, colliders or confounders. Suppose that there are $r$ confounders on $p$, since between every two confounders there is a collider, the number of colliders on $p$ is $r-1$. Denote the confounders on $p$ as $U_1,...,U_r$, the colliders as $C_1,...,C_{r-1}$, we have $p = A \doubleleftarrows U_1 \doubleto C_1 \doubleleftarrows \cdots \doubleto C_{h-1} \doubleleftarrows U_h \doubleto B$. According to the definition of inducing path, each collider $C_i$ on $p$ is the ancestor of either $A$ or $B$. Based on this, the following algorithm is assured to find a latent confounder between $A$ and $B$.

    \SetKwComment{Comment}{/* }{ */}
    \begin{algorithm}
    \caption{Find the latent confounder}
    \label{alg:A<->B lat confounder}
    \For{$i=1,...,r-1$}{
        \If{$C_i\in \mathbf{An}(B)$}{
            \Return $U_i$ \Comment*[r]{ $A \doubleleftarrows C_{i-1}\doubleleftarrows U_i \doubleto C_i \doubleto B$ }
        }
    }
    \Return $U_r$ \Comment*[r]{ $A \doubleleftarrows C_{r-1}\doubleleftarrows U_r \doubleto B$ }
    \end{algorithm}
\end{proof}

\newpage
\section{Structural identifiability}

\newcommand{\propidenmag}{3.1}
\newcommand{\eqsvarmodel}{1}
\newcommand{\asmselfcause}{2.10}
\newcommand{\propsdagandmag}{3.3}
\newcommand{\thmidensdag}{3.5}

\subsection{Proof of Proposition~\propidenmag: Identifiability of the MAG}

\noindent\textbf{Proposition \propidenmag.} \emph{Assuming model (\eqsvarmodel) and Assumption~\asmmarkandfaith, then the MAG over the observed variable set $\mathbf{O}$ is identifiable, i.e., its skeleton and edge orientations can be uniquely derived from the joint distribution $\mathbb{P}(\mathbf{O})$.}

To prove Proposition~\propidenmag, we first introduce the following lemma, which studies the property of the path between a vertex at $t$ and a vertex at $t+\tau$ ($\tau \geq 0$).

\begin{lemma}
Suppose that there is a path between $A(t),B(t+\tau)$ ($\tau \geq 0$) denoted as $p=A(t),V_1,...,V_r,B(t+\tau)$, if $\exists\, V_i$ on $p$ such that $time(V_i)\geq t+\tau$, then there is a collider on $p$ at $time\geq t+\tau$.
\label{lemma.future path collider}
\end{lemma}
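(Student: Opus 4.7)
The plan is to exploit the rigid temporal structure of the full time DAG: every edge lies between time steps $t-1$ and $t$ and is oriented from the earlier to the later one. Two consequences of this will drive the whole argument. First, along any path the time coordinates of consecutive vertices differ by exactly $1$. Second, an edge between two path-adjacent vertices points into the one with larger time, so an interior vertex $V_i$ is a collider on $p$ if and only if its time is strictly greater than the times of both its path-neighbors, i.e., $V_i$ is a strict local maximum of the time coordinate along $p$.

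Given this reformulation, I would let $T^* := \max \mathrm{time}(V)$ over all vertices $V$ on $p$ (endpoints included) and pick some vertex $V_{i^*}$ achieving $T^*$. By hypothesis some interior vertex has time $\geq t+\tau$, so $T^* \geq t+\tau$. I would then split into two cases. When $T^* > t+\tau$, both endpoints $A(t)$ and $B(t+\tau)$ have strictly smaller time than $T^*$, so $V_{i^*}$ must be interior; its two path-neighbors have time in $\{T^*-1, T^*+1\}$, and maximality forbids $T^*+1$, forcing both to sit at time $T^*-1$. Hence $V_{i^*}$ is a strict local maximum and therefore a collider at time $T^*\geq t+\tau$. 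When $T^* = t+\tau$, I would pick the interior vertex $V_{i^*}$ whose existence is guaranteed by hypothesis; its path-neighbors lie at time $t+\tau \pm 1$, and since path-adjacent vertices cannot share the same time, neither of them can be the endpoint $B(t+\tau)$, so the value $t+\tau$ is excluded; maximality of $T^*$ rules out $t+\tau+1$, so both neighbors sit at $t+\tau-1$, making $V_{i^*}$ a collider at time exactly $t+\tau$.

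The only subtlety is the boundary case $T^* = t+\tau$, where the maximum is also attained by $B$; the "path-adjacent vertices differ in time by exactly one" principle is what cleanly rules out any ambiguity and shows the interior witness at time $t+\tau$ is still a local maximum in the path-neighbor sense. Beyond that the argument reduces to the local-maximum characterization of colliders, so I do not anticipate any serious obstacle.
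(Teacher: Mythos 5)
Your proof is correct and follows essentially the same route as the paper's: both arguments reduce the claim to finding a vertex at (locally) maximal time along the path, note that its path-neighbors must sit one time step earlier because every edge of the full time DAG joins consecutive time steps and points forward, and conclude that this vertex is a collider at time $\geq t+\tau$. The only difference is cosmetic — you split on whether the global time-maximum exceeds or equals $t+\tau$, while the paper splits on whether any interior vertex exceeds $t+\tau$ — and your explicit handling of the endpoint edge cases is if anything slightly more careful than the paper's.
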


\begin{proof}
We discuss two possible cases and show that $p$ always contains a collider at $time\geq t+\tau$.
\begin{enumerate}[label=\arabic*.,leftmargin=*]
    \item $\forall\, V_i, time(V_i)\leq t+\tau$. In this case, there is a vertex $V_j$ on $p$ such that $time(V_j)=t+\tau$. Consider its adjacent vertices $V_{j-1},V_{j+1}$ on the path. According to our assumption, we have $time(V_{j-1})=time(V_{j+1})=t+\tau-1$. Hence, according to the causal precedence assumption, the edges among $V_{j-1},V_j,V_{j+1}$ are $V_{j-1}(t+\tau-1) \to V_j(t+\tau) \leftarrow V_{j+1}(t+\tau+1)$, which means $V_j(t+\tau)$ is a collider on $p$ at $time=t+\tau$.
    \item $\exists\,V_i, time(V_i)>t+\tau$. Let $j:=\arg\max_{i}time(V_i)$ and $t^*:=time(V_j)$. We have $t^* >t+\tau$. Consider the vertices $V_{j-1},V_j,V_{j+1}$, the edges among them are $V_{j-1}(t^* -1) \to V_j(t^*) \leftarrow V_{j+1}(t^* -1)$. Hence, $V_j(t^*)$ is a collider on $p$ at $time>t+\tau$.
\end{enumerate}
\end{proof}
Equipped with Lemma~\ref{lemma.future path collider}, we now introduce the proof of Proposition~\propidenmag below:

\newenvironment{proof_prop_iden_mag}{
  \renewcommand{\proofname}{Proof of Proposition~\propidenmag}\proof}{\endproof}
\begin{proof_prop_iden_mag}
According to \citep{zhang2008completeness}, the skeleton of the MAG is identifiable under Assumption~\asmmarkandfaith. 

The orientation of edges in the MAG is either directed ($\to$) or bi-directed ($\leftrightarrow$). Since we assume that the cause must precedes the effect, instantaneous edges in the MAG must be bi-directed. In the following, we show that lagged edges in the MAG must be directed and cross $k$ time steps, \emph{e.g.}, $A(t) \to B(t+k)$.

Suppose that $A(t_1)$ and $B(t_2)$ ($t_2>t_1$) are adjacent in the MAG. We first prove that $t_2=t_1+k$. Prove by contradiction. Since $A(t_1)$ and $B(t_2)$ are adjacent in the MAG, there is an inducing path $p=A(t_1),V_1,...,V_r,B(t_2)$ between them in the full time DAG. Suppose that $t_2\neq t_1+k$, in other words, $t_2 \geq t_1+2k$. Since the order of the SVAR process is $1$, there exist $V_i$ on $p$ such that $time(V_i)=t_1+k, time(V_{i+1})=t_1+k+1$. Hence, $V_i(t_1+k)$ is neither latent nor a collider\footnote{$V_i(t_1+k) \to V_{i+1}(t_1+k+1)$ means $V_i(t_1+k)$ can not be a collider.}. As a result, the path $p$ is not an inducing path, which is a contradiction. 

Next, we prove that the edge between $A(t),B(t+k)$ is directed. We prove this by showing that $A(t) \leftrightarrow B(t+k)$ can not be true. Prove by contradiction. Suppose that we have $A(t) \leftrightarrow B(t+k)$ in the MAG, then there is an inducing path $p=A(t),V_1,...,V_r,B(t+k)$ between $A(t),B(t+k)$, and $A(t) \not \in \mathbf{An}(B(t+k))$\footnote{$B(t+k)$ can not be the ancestor of $A(t)$ according to the causal precedence assumption.} in the full time DAG. There are three kinds of possible inducing paths:
\begin{enumerate}[label=\arabic*.]
    \item $\exists\, V_i$ such that $time(V_i)\geq t+k$;
    \item $\forall\, V_i$, $time(V_i)<t+k$ and $time(V_i)>t$;
    \item $\forall\, V_i$, $time(V_i)<t+k$ and $\exists\, V_j$ such that $time(V_j)\leq t$.
\end{enumerate}
In the following, we will show neither of these paths exist, which is a contradiction and shows that the edge between $A(t),B(t+k)$ is directed.
\begin{enumerate}[label=\arabic*.,leftmargin=*]
    \item $\exists\, V_i$ such that $time(V_i)\geq t+k$. In this case, according to Lemma~\ref{lemma.future path collider}, there is a collider on $p$ at $time\geq t+k$. Since the assumed causal precedence, the collider can not be the ancestor of $A(t)$ or $B(t+k+1)$. Hence, the path $p$ is not an inducing path.
    \item $\forall\, V_i$, $time(V_i)<t+k$ and $time(V_i)>t$. In other words, we have $time(V_i)=t+1,..,t+k-1$. We show that in this case, we have $A(t) \in \mathbf{An}(B(t+k))$ in the full time DAG, which contradicts with $A(t) \leftrightarrow B(t+k)$ in the MAG. 
    
    Specifically, when $k=2$, the path is $p=A(t) \to V_1(t+1) \to B(t+2)$. Hence, $A(t) \in \mathbf{An}(B(t+k))$. When $k\geq 3$, consider the first \contour{black}{$\leftarrow$} edge along the path\footnote{If all edges among the path are $\to$, then $A(t) \in \mathbf{An}(B(t+k))$.}, we have $A(t)\doubleto V_{i-1}(t_i+1)$ \contour{black}{$\leftarrow$} $V_i(t_i)$ and $i\geq 3$. Therefore, $V_{i-1}(t_i+1)$ is the descendent of $A(t)$ and a collider on $p$. According to the definition of the inducing path, $V_{i-1}(t_i+1)$ is the ancestor of $B(t+k)$, which means $A(t) \in \mathbf{An}(B(t+k))$.
    \item $\forall\, V_i$, $time(V_i)<t+k$ and $\exists\, V_j$ such that $time(V_j)\leq t$. In this case, we further consider two possibilities,
    \begin{enumerate}[label=\alph*.]
        \item $\forall\,V_i, t\leq time(V_i) < t+k$. In this case, there is a vertex $V_j$ on $p$ such that $time(V_j)=t$ and we have $V_{j-1}(t+1) \leftarrow V_j(t) \to V_{j+1}(t+1)$. Therefore, $V_j(t)$ is neither a latent variable nor a collider, which means $p$ is not an inducing path.
        \item $\forall\, V_i, time(V_i) < t+k$ and $\exists\, V_i$ such that $time(V_i)<t$. In this case, we have $p=A(t),...,V_i(t-\tau),...,B(t+k)$ ($\tau>0$). As a result,there is a vertex $V_j$ on $p$ between $V_i(t-\tau)$ and $B(t+k)$ such that $time(V_j)=t$ and $time(V_{j+1}=t+1$. Therefore, $V_j(t)$ is neither a latent variable nor a collider, which means $p$ is not an inducing path.
    \end{enumerate}
\end{enumerate}
To conclude, we have shown that if there is an inducing path between $A(t),B(t+k)$, then $A(t) \in \mathbf{An}(B(t+k))$, which means all lagged edges in the MAG are directed ones.
\end{proof_prop_iden_mag}

\subsection{Proof of Proposition~\propsdagandmag: MAG to summary DAG}

\noindent\textbf{Proposition~\propsdagandmag.}\emph{If there are $A(t) \to B(t+k)$ and $A(t+k) \leftrightarrow B(t+k)$ in the MAG, then, in the summary DAG, there is either 
\begin{enumerate}[leftmargin=25pt]
    \item $A \to B$,
    \item a directed path from $A$ to $B$ with length $l\leq k-2$, or
    \item a directed path from $A$ to $B$ with length $l=k-1$ and a confounding structure between them with lengths $(r\leq k-2,q\leq k-2)$.
\end{enumerate}}

\begin{proof}
\begin{enumerate}[label=\arabic*.,leftmargin=*]
    \item We first show that If there is $A\to B$ in the summary DAG, then, there are $A(t) \to B(t+k)$ and $A(t+k) \leftrightarrow B(t+k)$ in the MAG: 
    
    Suppose that there is $A\to B$ in the summary DAG. Then, there is a directed inducing path $p_1=A(t)\to A(t+1)\doubleto A(t+k-1)\to B(t+k)$ from $A(t)$ to $B(t+k)$ in the full time DAG. Thus, we have $A(t) \to B(t+k)$ in the MAG. 
    
    Besides, there is an inducing path $p_2=A(t+k) \leftarrow A(t+k-1) \to B(t+k)$ in the full time DAG, which means we have $A(t+k) \leftrightarrow B(t+k)$ in the MAG.
    
    \item We then show that if there is $A(t) \to B(t+k)$ in the MAG, then, there is a directed path from $A$ to $B$ with length $0\leq l\leq k-1$ (a directed path with $l=0$ means $A\to B$) in the summary DAG:
    
    Suppose that there is $A(t) \to B(t+k)$ in the MAG. Then, there is a directed path $A(t)\to V_1(t+1) \doubleto V_{k-1}(t+k-1) \to B(t+k)$ in the full time DAG. Denote the number of $V_i$ such that $V_i=A$ as $r$, then, we have $0\leq r \leq k-1$. Hence, in the summary DAG, there is a directed path from $A$ to $B$ with length $l=k-r-1$ and we have $0 \leq l \leq k-1$.
    
    \item We finally show that if there is $A(t+k)\leftrightarrow B(t+k)$ in the MAG, then, in the summary DAG, at least one of the following structures exists:
    \begin{enumerate}[label=\alph*.]
        \item A directed path\footnote{The path can be from $A$ to $B$ or from $B$ to $A$.} $p$ between $A$ and $B$ of length $0\leq l\leq k-2$ (a directed path with $l=0$ means $A\to B$);
        \item A confounding structure $c$ between $A$ and $B$ of length $(r\leq k-2, q\leq k-2)$.
    \end{enumerate}
    
    Suppose that there is $A(t+k)\leftrightarrow B(t+k)$ in the MAG, we will show that there is a latent confounder $U$ between $A(t+k)$ and $B(t+k)$ such that $t+1\leq time(U)\leq t+k-1$. 
    
    In this regard, if $U=A$ or $U=B$, there is a directed path between $A$ and $B$ with length $0\leq l\leq k-2$; Otherwise when $U\neq A$ and $U\neq B$, there is a confounding structure between $A$ and $B$ with length $(r\leq k-2,q\leq k-2)$.

    In the following, we prove that there is a latent confounder $U$ between $A(t+k)$ and $B(t+k)$ such that $t+1\leq time(U)\leq t+k-1$. Prove by contradiction. Suppose that $time(U)\leq t$ or $time(U)\geq t+k$. If $time(U)\leq t$, then on the directed inducing path from $U$ to $A$, there is a vertex at time $t$, which means the path can not be an inducing path. If $time(U)\geq t+k$, then there is a collider on the inducing path between $U$ and $A$ according to Lemma~\ref{lemma.future path collider}, which means the path is not directed and $U$ is not a latent confounder. Hence, the time of the latent confounder $U$ is between $t+1$ and $t+k-1$.
\end{enumerate}

To conclude, combining the results in 2. and 3., we prove the proposition.
\end{proof}

\subsection{Proof of Theorem~\thmidensdag: Identifiability of the summary DAG}

For two vertex $A,B$, let the vertex set $\mathbf{M}$ contain $A$ and any $M_i\neq B$ such that $A(t)\to M_i(t+k)$ in the MAG and $M_i$ is not $B$'s descendant\footnote{This can be justified from the MAG. Specifically, if $M_i$ is the descendant of $B$, then there is $B(t) \to V_1(t+k), V_1(t) \to V_2(t+k), ..., V_l(t)\to M_i(t+k)$ in the MAG.}. Let the vertex set $\mathbf{S}$ contains any $S_i\neq A$ such that $S_i(t) \to B(t+k)$ or $S_i(t) \to M_j(t+k)$ for some $M_j \in \mathbf{M}$ in the MAG.

\noindent\textbf{Theorem~\thmidensdag.} \emph{Assuming model (\eqsvarmodel), Assumption~\asmmarkandfaith, and Assumptions~2.6,2.7,2.11, then the summary DAG is identifiable. Specifically,
\begin{enumerate}[leftmargin=25pt,itemsep=0.15mm]
    \item There is $A\to B$ in the summary DAG iff there are $A(t)\to B(t+k), A(t+k)\leftrightarrow B(t+k)$ in the MAG, and the set $\mathbf{M}(t+1)\cup \mathbf{S}(t)$ is not sufficient to d-separate $A(t),B(t+k)$ in the full time DAG. 
    \item The condition ``the set $\mathbf{M}(t+1)\cup \mathbf{S}(t)$ is not sufficient to d-separate $A(t),B(t+k)$ in the full time DAG'' can be tested by the proxy variable $\mathbf{M}(t+k)$ of the unobserved set $\mathbf{M}(t+1)$.
\end{enumerate}}

\begin{proof}
\begin{enumerate}[label=\arabic*.,leftmargin=*]
    \item $\Rightarrow$. Suppose that there is $A \to B$ in the summary DAG. According to Proposition~\propsdagandmag, we have $A(t) \to B(t+k)$ and $A(t+k) \leftrightarrow B(t+k)$ in the MAG. In addition, in the full time DAG, there is a directed path $p=A(t)\to B(t+1)\doubleto B(t+k)$ from $A(t)$ to $B(t+k)$, which is not \emph{d}-separated by $\mathbf{M}(t+1)\cup \mathbf{S}(t)$ (because $B \not \in \mathbf{M}$).
    
    $\Leftarrow$. Suppose that there are $A(t) \to B(t+k)$ and $A(t+k) \leftrightarrow B(t+k)$ in the MAG. According to Proposition~\propsdagandmag, at least one of the following structures exists:
    \begin{enumerate}[label=\alph*.]
        \item $A\to B$;
        \item A directed path $p_{AB}$ from $A$ to $B$ of length $0<l\leq k-2$;
        \item A directed path $p_{AB}$ from $A$ to $B$ of length $l=k-1$ and a confounding structure $c_{AB}$ between $A$ and $B$ of length $(r\leq k-2,q\leq k-2)$.
    \end{enumerate}
    In the following, we show that if the set $\mathbf{M}(t+1)\cup \mathbf{S}(t)$ is not sufficient to \emph{d}-separate $A(t),B(t+k)$ in the full time DAG, $A\to B$ must exist in the summary DAG. We prove this by its contrapositive statement, \emph{i.e.}, if $A\to B$ does not exist, the set $\mathbf{M}(t+1)\cup \mathbf{S}(t)$ is sufficient to \emph{d}-separate $A(t)$ and $B(t+k)$.

    Consider the path between $A(t)$ and $B(t+k)$ in the full time DAG $p=A(t),V_1,...,V_l,B(t+k)$. There are three possible cases:
    \begin{enumerate}[label=\alph*.]
        \item $\exists\, V_i, time(V_i)\geq t+k$;
        \item $\forall\, V_i, time(V_i)<t+k$ and $\exists\, V_j$ such that $time(V_j)\leq t$;
        \item $\forall\, V_i, t<time(V_i)<t+k$.
    \end{enumerate}
    Next, we will show that if $A\to B$ does not exist, the set $\mathbf{M}(t+1)\cup \mathbf{S}(t)$ can \emph{d}-separate all these three kinds of paths.
    \begin{enumerate}[label=\alph*.]
        \item For the path that $\exists\, V_i, time(V_i)\geq t+k$. According to Lemma~\ref{lemma.future path collider}, there is a collider at $time\geq t+k$. Because the collider and its descendants are not in $\mathbf{M}(t+1)\cup \mathbf{S}(t)$, the path is \emph{d}-separated.
        \item For the path that $\forall\, V_i, time(V_i)<t+k$ and $\exists\, V_j$ such that $time(V_j)\leq t$. There is a vertex $V_j$ on the path such that $V_j(t)\to V_{j+1}(t+1) \to V_{j+2}(t+2)\to\!\cdots$\footnote{Strictly, when $k=2$, $V_{j+2}$ is $B(t+2)$.} If the path from $V_j(t)$ to $B(t+k)$ is a directed path, we have $V_j\in \mathbf{S}$ and the path is \emph{d}-separated. Otherwise, there is a collider on the path at $time\geq t+2$. Neither the collider nor its descendants are in $\mathbf{M}(t+1)\cup \mathbf{S}(t)$, the path is also \emph{d}-separated.
        \item For the path that $\forall\, V_i, t<time(V_i)<t+k$. We have $A(t)\to V_1(t+1) \to V_2(t+2) \to\!\cdots$. If the path from $V_1(t+1)$ to $B(t+k)$ is a directed one, $V_1\in \mathbf{M}$\footnote{$A(t)\to V_1(t+1)$ means we have $A(t) \to V_1(t+k)$ in the MAG. $V_1$ is the ancestor of $B$, so $V_1 \not\in \mathbf{De}(B)$. Hence, $V_i\in \mathbf{M}$.}. Otherwise, there is a collider on the path at $time\geq t+2$. Neither the collider nor its descendants are in $\mathbf{M}(t+1)\cup \mathbf{S}(t)$, the path is also \emph{d}-separated.
    \end{enumerate}

    \item In the following, we show that the condition ``the set $\mathbf{M}(t+1)\cup \mathbf{S}(t)$ is not sufficient to \emph{d}-separate $A(t),B(t+k)$ in the full time DAG'' can be justified by testing $A(t) \ind B(t+k) | \mathbf{M}(t+1)\cup \mathbf{S}(t)$ with proxy variables.
    \begin{enumerate}[label=\alph*.]
        \item We first explain that $A(t),B(t+k),\mathbf{M}(t+1),\mathbf{M}(t+k)$ have the causal graph shown in Fig.~\ref{fig.ABZ proxy causal graph}.

        \begin{figure}
            \centering
            \includegraphics[width=.4\textwidth]{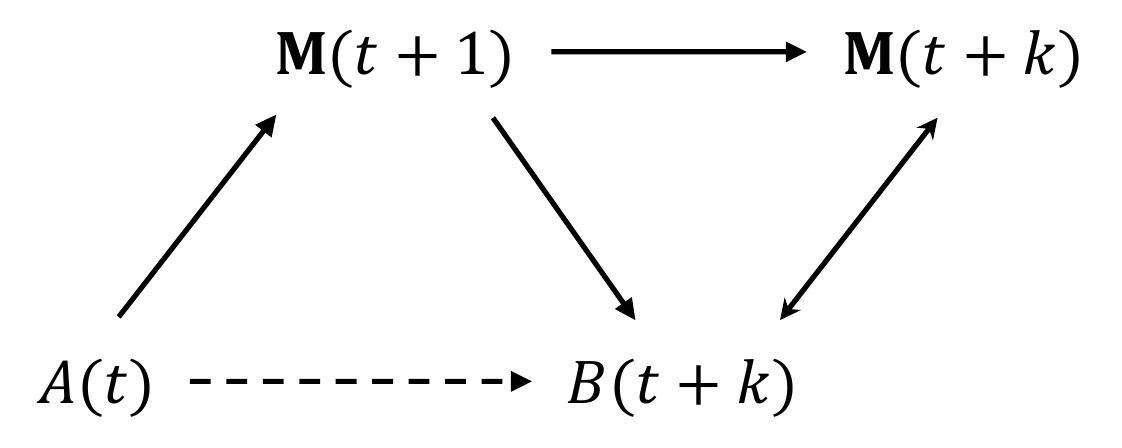}
            \caption{Causal graph over $A(t),B(t+k),\mathbf{M}(t+1),\mathbf{M}(t+k)$.}
            \label{fig.ABZ proxy causal graph}
        \end{figure}

        \begin{itemize}
            \item $A(t)\to \mathbf{M}(t+1)$. By definition, $A$ is the ancestor of vertices in $\mathbf{M}$.
            \item $\mathbf{M}(t+1) \to \mathbf{M}(t+k)$. By Assumption~\asmselfcause the self causation edge always exists.
            \item $\mathbf{M}(t+1) \to B(t+k)$. By definition, $B$ is not the ancestor of any vertex in $\mathbf{M}$.
            \item $\mathbf{M}(t+1) \leftrightarrow B(t+1)$. According to Proposition~\propidenmag, the instantaneous edge must be bi-directed.
        \end{itemize}
        Indeed, the directed edge $\mathbf{M}(t+1) \to B(t+k)$ and the bi-directed edge $\mathbf{M}(t+k)\leftrightarrow B(t+k)$ may not exist. However, according to the theory of single proxy causal discovery, as long as we have $A(t) \ind \mathbf{M}(t+k) | \mathbf{M}(t+1),\mathbf{S}(t)$ (which will be proved in the following), we can test whether $A(t) \to B(t+k)$ with the proxy variable $\mathbf{M}(t+k)$.
        
        \item We next show that $A(t) \ind \mathbf{M}(t+k) | \mathbf{M}(t+1),\mathbf{S}(t)$, \emph{i.e.}, $\mathbf{M}(t+k)$ can act as a legal proxy variable of $\mathbf{M}(t+1)$. We prove this by showing that all path between $A(t)$ to any $Z_i(t+k)\in \mathbf{M}(t+k)$ can be \emph{d}-separated by $\mathbf{M}(t+1)\cup \mathbf{S}(t+1)$. Specifically, we consider two different cases:
        \begin{itemize}
            \item When $A\not \to B$ in the summary DAG. To prove this conclusion, again, consider three kinds of paths $A(t),V_1,...,V_l,Z_i(t+k)$ between $A(t)$ and $Z_i(t+k)$, 
            \begin{enumerate}[label=\roman*.]
                \item For the path that $\exists\, V_i, time(V_i)\geq t+k$. It must contain a collider at $time\geq t+k$. Hence, neither the collider nor its descendant is in $\mathbf{M}(t+1)\cup\mathbf{S}(t)$ and the path is \emph{d}-separated.
                \item For the path that $\forall\, V_i, time(V_i)<t+k$ and $\exists\, V_j$ such that $time(V_j)\leq t$. Then, there is a vertex $V_j$ on the path such that $V_j(t)\to V_{j+1}(t+1) \to V_{j+2}(t+2)\to\!\cdots$. If the path between $V_j(t)$ and $Z_i(t+k)$ is a directed one, then we have $V_j(t)\in \mathbf{S}(t)$ and the path is \emph{d}-separated. Otherwise, there is a collider on the path between $V_{j+2}(t+2)$ and $Z_i(t+k)$. Thus, neither the collider nor its descendant is in $\mathbf{M}(t+1)\cup\mathbf{S}(t)$ and the path is also \emph{d}-separated.
                \item For the path that $\forall\, V_i, t<time(V_i)<t+k$. In this case, the path must be $A(t)\to V_1(t+1) \to V_2(t+2) \to\!\cdots$. If the path between $V_1(t1)$ and $Z_i(t+k)$ is a directed one, we have $V_1(t+1)\in \mathbf{M}(t+1)$, because of the following facts: $V_1\neq B$, $A\to V_1$, $V_1\not\in\mathbf{De}(B)$\footnote{If $V_1\in \mathbf{De}(B)$, since $V_1$ is the ancestor of $Z_i$, we have $Z_i\in \mathbf{De}(B)$, which contradicts with the definition of the vertex set $\mathbf{M}$.}. As a result, the path is \emph{d}-separated. Otherwise, when the path between $V_1(t1)$ and $Z_i(t+k)$ is not a directed one, there is a collider on the path at $time\geq t+2$. Therefore, neither the collider nor its descendant is in $\mathbf{M}(t+1)\cup\mathbf{S}(t)$ and the path is \emph{d}-separated.
            \end{enumerate}
            \item When $A \to B$ in the summary DAG. In this case, except for the above analysis, we need to extra show that every path $p=A(t)\to B(t+1),V_1,...,V_l,Z_i(t+k)$ can be \emph{d}-separated by $\mathbf{M}(t+1)\cup \mathbf{S}(t)$. Specifically, since we define $Z_i\not\in\mathbf{De}(B)$, $p$ can not be a directed path. Therefore, starting from $B(t+1)$ along the path, at least one of the edges is $\leftarrow$. As a result, there is a collider on $p$ which is either $B(t+1)$ itself or at $time\geq t+2$. For both cases, neither the collider nor its descendant is in $\mathbf{M}(t+1)\cup\mathbf{S}(t)$ and the path is \emph{d}-separated.
        \end{itemize}
    \end{enumerate}
\end{enumerate}    
\end{proof}

\newpage
\section{Experiment}

\newcommand{\figinterresults}{6}
\newcommand{\secexpad}{5.2}
\newcommand{\figadnintynodes}{7}

% \subsection{Full results of Fig.~\figinterresults: Evaluation of intermediate results}

% \begin{figure}[htp]
%     \centering
%     \includegraphics[width=\textwidth]{nips23_fig/intermediate_results_all.pdf}
%     \caption{Evaluation of intermediate results. We report $F_1$-score, precision, and recall of the recovered MAG, edges identified by proxies, and the recovered summary DAG.}
%     \label{appfig.intermediate_results_all}
% \end{figure}

\subsection{Extra results of Section~5.1: Synthetic study}

\begin{figure}[htp]
    \centering
    \includegraphics[width=.85\textwidth]{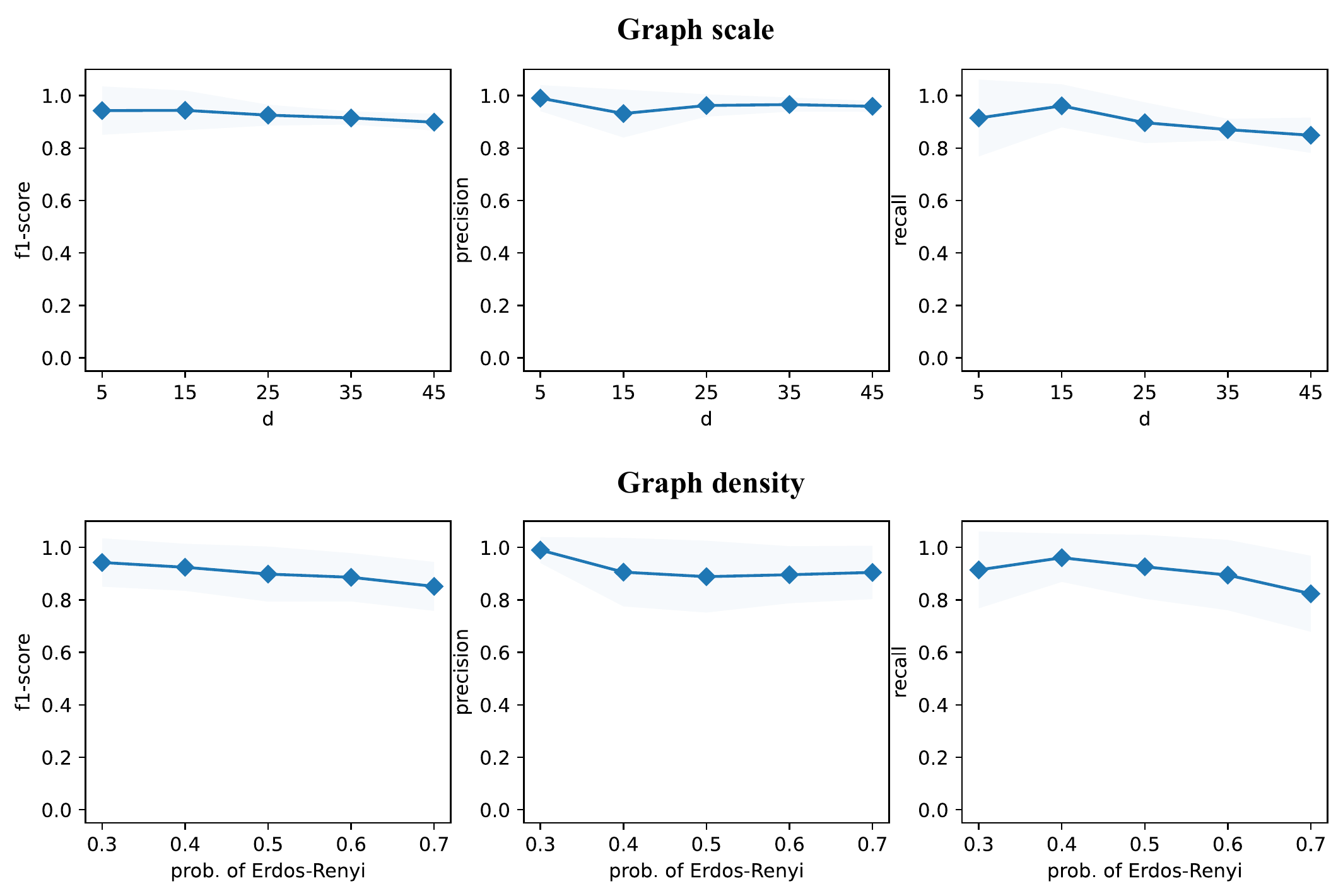}
    \caption{Performance of our method under different graph scales ($d$ denotes the variable number) and densities ($\mathrm{prob.}$ denotes the edge probability in the Erdos-Renyi model).}
\end{figure}

\subsection{Extra results of Section~\secexpad: Discovering causal pathways in Alzheimer's disease}

\begin{figure}[htp]
    \centering
    \includegraphics[width=.65\textwidth]{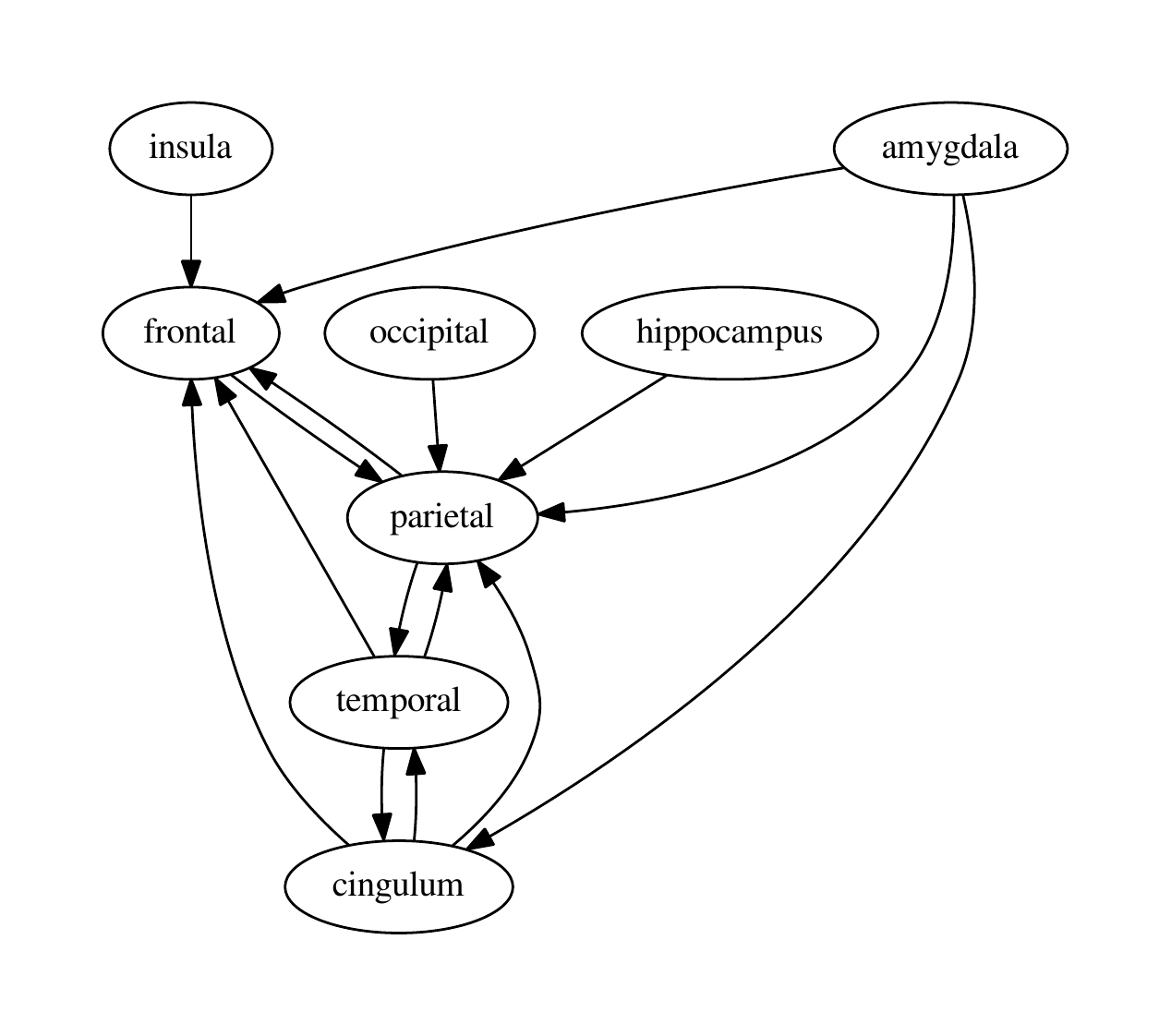}
    \caption{Summary DAG over eight meta-regions in AD recovered by the NG-EM \cite{gong2015discovering} baseline.}
    \label{appfig.ngem8nodes}
\end{figure}

\begin{figure}[htp]
    \centering
    \includegraphics[width=.9\textwidth]{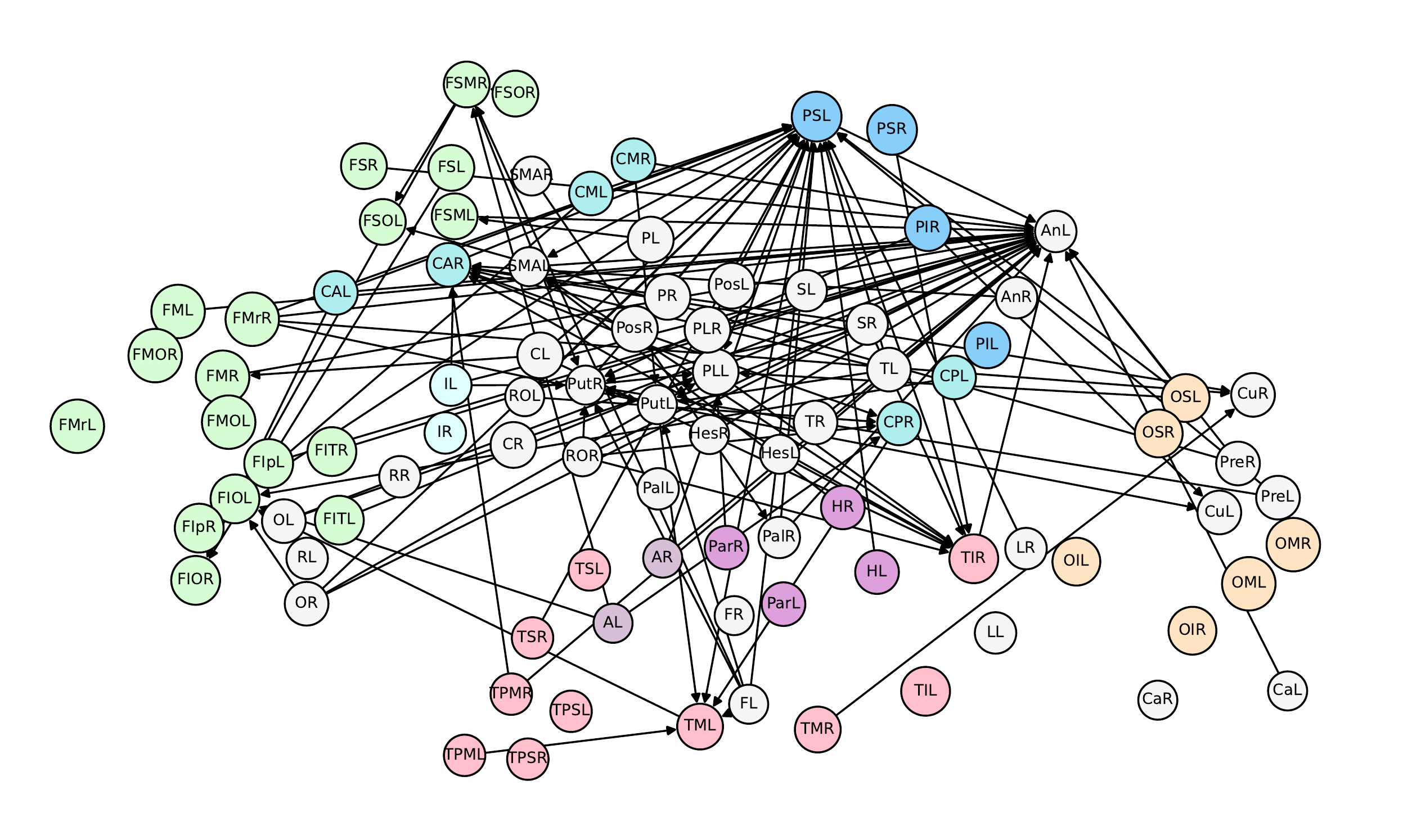}
    \caption{Summary DAG in AD recovered by the NG-EM \cite{gong2015discovering} baseline.}
    \label{appfig.ngem90nodes}
\end{figure}

\begin{figure}[htp]
    \centering
    \includegraphics[width=.9\textwidth]{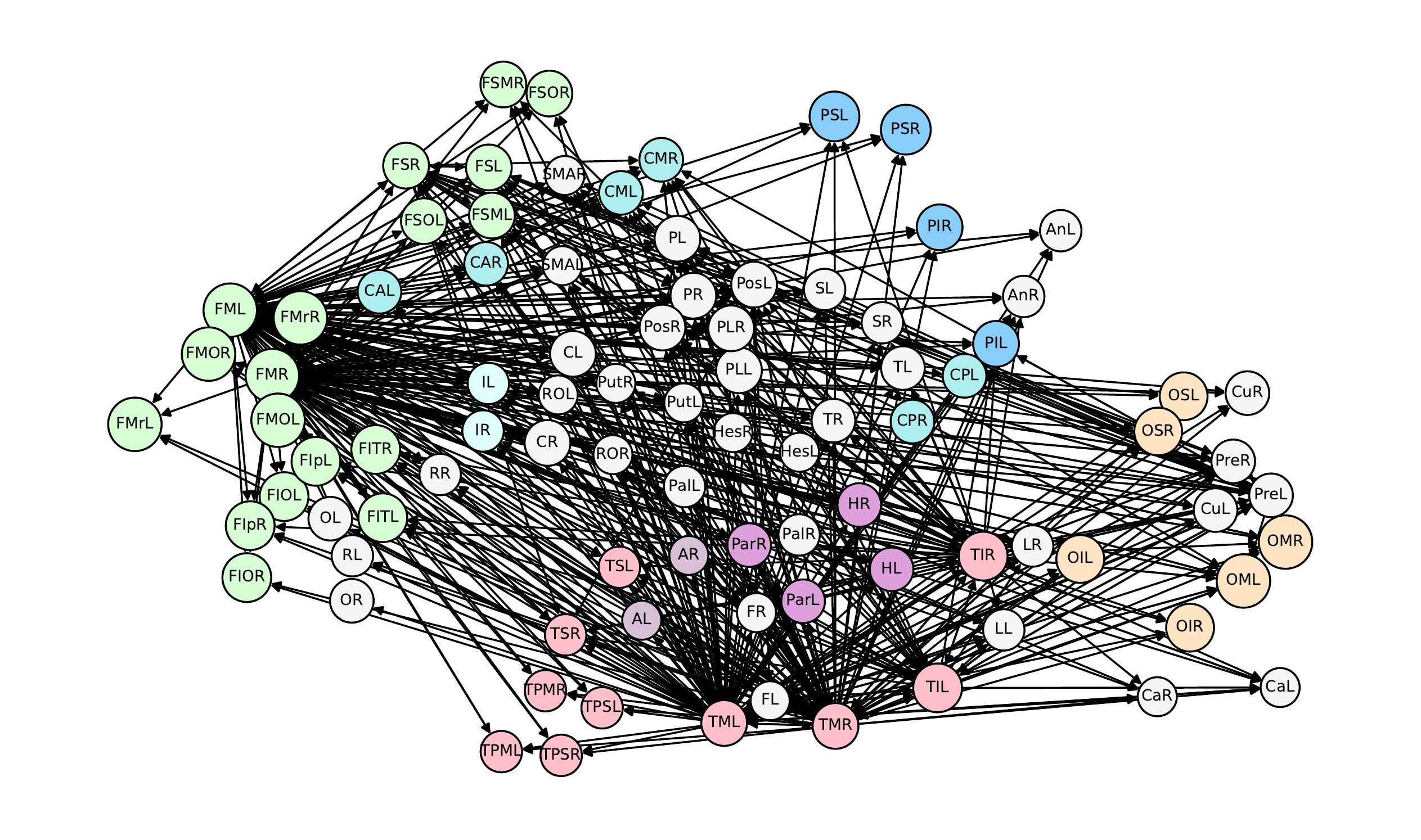}
    \caption{Summary DAG in AD recovered by the Dynotears \cite{pamfil2020dynotears} baseline.}
    \label{appfig.dynotear90nodes}
\end{figure}

\begin{table}[htp]
    \caption{Indices for brain regions partition in Fig.~\figadnintynodes.}
    \label{apptab:brain region abbr}
    \centering
    \resizebox{\textwidth}{!}{ 
    \begin{tabular}{c c c | c c c}
    \toprule
    AAL Index & Abbreviation & Full name & AAL Index & Abbreviation & Full name\\
    \midrule
    1 & PL & Precentral L &  2 & PR & Precentral R\\
     3 & FSL & Frontal Sup L &  4 & FSR & Frontal Sup R\\
     5 & FSOL & Frontal Sup Orb L &  6 & FSOR & Frontal Sup Orb R\\
     7 & FML & Frontal Mid L &  8 & FMR & Frontal Mid R\\
     9 & FMOL & Frontal Mid Orb L1 &  10 & FMOR & Frontal Mid Orb R1\\
     11 & FIOL & Frontal Inf Oper L &  12 & FIOR & Frontal Inf Oper R\\
     13 & FITL & Frontal Inf Tri L &  14 & FITR & Frontal Inf Tri R\\
     15 & FIpL & Frontal Inf Orb L &  16 & FIpR & Frontal Inf Orb R\\
     17 & ROL & Rolandic Oper L &  18 & ROR & Rolandic Oper R\\
     19 & SMAL & Supp Motor Area L &  20 & SMAR & Supp Motor Area R\\
     21 & OL & Olfactory L &  22 & OR & Olfactory R\\
     \midrule
     23 & FSML & Frontal Sup Medial L &  24 & FSMR & Frontal Sup Medial R\\
     25 & FMrL & Frontal Mid Orb L2 &  26 & FMrR & Frontal Mid Orb R2\\
     27 & RL & Rectus L &  28 & RR & Rectus R\\
     29 & IL & Insula L &  30 & IR & Insula R\\
     31 & CAL & Cingulum Ant L &  32 & CAR & Cingulum Ant R\\
     33 & CML & Cingulum Mid L &  34 & CMR & Cingulum Mid R\\
     35 & CPL & Cingulum Post L &  36 & CPR & Cingulum Post R\\
     37 & HL & Hippocampus L &  38 & HR & Hippocampus R\\
     39 & ParL & ParaHippocampal L &  40 & ParR & ParaHippocampal R\\
     41 & AL & Amygdala L &  42 & AR & Amygdala R\\
     43 & CaL & Calcarine L &  44 & CaR & Calcarine R\\
     45 & CuL & Cuneus L &  46 & CuR & Cuneus R\\
     \midrule
     47 & LL & Lingual L &  48 & LR & Lingual R\\
     49 & OSL & Occipital Sup L &  50 & OSR & Occipital Sup R\\
     51 & OML & Occipital Mid L &  52 & OMR & Occipital Mid R\\
     53 & OIL & Occipital Inf L &  54 & OIR & Occipital Inf R\\
     55 & FL & Fusiform L &  56 & FR & Fusiform R\\
     57 & PosL & Postcentral L &  58 & PosR & Postcentral R\\
     59 & PSL & Parietal Sup L &  60 & PSR & Parietal Sup R\\
     61 & PIL & Parietal Inf L &  62 & PIR & Parietal Inf R\\
     63 & SL & SupraMarginal L &  64 & SR & SupraMarginal R\\
     65 & AnL & Angular L &  66 & AnR & Angular R\\
     67 & PreL & Precuneus L &  68 & PreR & Precuneus R\\
     69 & PLL & Paracentral Lobule L &  70 & PLR & Paracentral Lobule R\\
     \midrule
     71 & CL & Caudate L &  72 & CR & Caudate R\\
     73 & PutL & Putamen L &  74 & PutR & Putamen R\\
     75 & PalL & Pallidum L &  76 & PalR & Pallidum R\\
     77 & TL & Thalamus L &  78 & TR & Thalamus R\\
     79 & HesL & Heschl L &  80 & HesR & Heschl R\\
     81 & TSL & Temporal Sup L &  82 & TSR & Temporal Sup R\\
     83 & TPSL & Temporal Pole Sup L &  84 & TPSR & Temporal Pole Sup R\\
     85 & TML & Temporal Mid L &  86 & TMR & Temporal Mid R\\
     87 & TPML & Temporal Pole Mid L &  88 & TPMR & Temporal Pole Mid R\\
     89 & TIL & Temporal Inf L &  90 & TIR & Temporal Inf R\\
    \bottomrule
    \end{tabular}}
\end{table}

\end{document}